\documentclass{article}



\usepackage[preprint]{neurips_2024}



\usepackage[utf8]{inputenc} 
\usepackage[T1]{fontenc}    
\usepackage{hyperref}       
\usepackage{url}            
\usepackage{booktabs}       
\usepackage{amsfonts}       
\usepackage{nicefrac}       
\usepackage{microtype}      
\usepackage{xcolor}         
\usepackage{amsmath}
\usepackage{amssymb}
\usepackage{amsthm}
\usepackage{mathtools}
\usepackage{bm}
\usepackage{algorithm}
\usepackage{algpseudocode}
\usepackage{multicol}
\usepackage{graphicx}
\usepackage{subcaption}
\usepackage{multicol}
\usepackage{enumitem}
\newtheorem{theorem}{Theorem}
\newtheorem{lemma}{Lemma}
\newtheorem{definition}{Definition}
\newtheorem{corollary}{Corollary}
\newtheorem{assumptions}{Assumptions}
\usepackage{float}
\newcommand*\samethanks[1][\value{footnote}]{\footnotemark[#1]}

\title{GeoAdaLer: Geometric Insights into Adaptive Stochastic Gradient Descent Algorithms}

%

\author{
  Chinedu ~Eleh\thanks{Authors conrtibuted equally.} \\
  Department of Mathematics and Statistics\\
  Auburn University\\
  Auburn, AL 36849 \\
  \texttt{cae0027@auburn.edu} \\
  \And
  Masuzyo ~Mwanza\samethanks \\
  Department of Mathematics and Statistics\\
  Auburn University\\
  Auburn, AL 36849 \\
  \texttt{mzm0183@auburn.edu} \\
  \AND
  Ekene ~Aguegboh \\
  Department of Agricultural Economics and Rural Sociology\\
  Auburn University\\
  Auburn, AL 36849 \\
  \texttt{esa0013@auburn.edu} \\
  \And
  Hans-Werner ~van Wyk \\
  Department of Mathematics and Statistics\\
  Auburn University\\
  Auburn, AL 36849 \\
  \texttt{hzv0008@auburn.edu} \\
}

\begin{document}

\maketitle

\begin{abstract}
  The Adam optimization method has achieved remarkable success in addressing contemporary challenges in stochastic optimization. This method falls within the realm of adaptive sub-gradient techniques, yet the underlying geometric principles guiding its performance have remained shrouded in mystery, and have long confounded researchers. In this paper, we introduce GeoAdaLer (Geometric Adaptive Learner), a novel adaptive learning method for stochastic gradient descent optimization, which draws from the geometric properties of the optimization landscape. Beyond emerging as a formidable contender, the proposed method extends the concept of adaptive learning by introducing a geometrically inclined approach that enhances the interpretability and effectiveness in complex optimization scenarios.
\end{abstract}

\section{Introduction}
Stochastic gradient descent (SGD) optimization methods \citep{robbins1951stochastic, rumelhart1986learning} play an important role in various scientific fields. When applied to machine learning algorithms, the objective is to adjust a set of parameters with the goal of optimizing an objective function.  This optimization usually involves a series of iterative adjustments made to the parameters in each step as the algorithm progresses \citep{kingma2014adam}.  In the vanilla gradient descent approach, the magnitude of the gradient is the predominant annealing factor causing the algorithm to take larger steps when you are away from the optimum and smaller steps when closer to an optimum \citep{ruder_overview_2017}. This method, however, becomes less effective near an optimal point, necessitating the selection of a smaller learning rate, which in turn affects the speed of convergence. The raw magnitude of the gradient does not always align with the optimal descent step size, thus necessitating a manually chosen learning rate. If the learning rate is too big, overshooting may occur and convergence rate is slow if the learning rate is too small. This challenge has led to the development of adaptive learning algorithms \citep{nar2018step}. In this paper, we explore gradient descent algorithms that optimize the objective function with emphasis on the update rule. In this context, we focus on the update rule by breaking it into three components: the learning rate, the annealing factor, and the descent direction. This approach allows us to evaluate the effectiveness of an algorithm by examining the impact of its learning rate, annealing factor, and descent direction on the optimization process.

In this paper, we propose GeoAdaLer (short for Geometric Adaptive Learner), a new adaptive learning method for SGD optimization that is based on the geometric properties of the objective landscape. We use cosine of $\theta$ (where $\theta$ is the acute angle between the normal to the tangent hyperplane and the horizontal hyperplane) as an annealing factor, which takes values close to zero when the optimization is traversing points close to an optimum and close to one for points far away from an optimum. Our method has surprising similarities to other adaptive learning methods.

Some of the advantages of GeoAdaLer is that it introduces a geometric approach for the annealing factor and outperforms standard SGD optimization methods due to the cosine of $\theta$. Through both theoretical analysis and empirical observation, we identified similarities between our proposed method and other adaptive learning techniques. 
Our method enhances the understanding of existing algorithms and opens up more opportunities for geometric interpretability of how the algorithms traverse the objective manifold.

Additionally, we analyze the convergence of GeoAdaLer. We frame the optimization process as a fixed-point problem and split the analysis into deterministic and stochastic cases. Under the deterministic framework, we assume convexity and the existence of a finite optimal value. We utilize the Lipschitz continuity property of the gradient to establish upper bounds of convergence. We further employ the co-coercivity and quadratic upper bound properties to establish the lower bounds of convergence \citep{vandenberghe_optimization_nodate}. Under the stochastic framework, we employ the regret function, which measures the overall difference between our method and the known optimum point, ensuring that as time tends to infinity, the regret function over time tends to zero. By ensuring that the upper bound of the regret function goes to zero, we determine the overall convergence of the stochastic method to an optimum point. Both the deterministic and stochastic analyses demonstrate the robustness of GeoAdaLer and enhance our understanding of its practical applications.

\section{Related Work}

In the field of adaptive stochastic gradient descent algorithms, we continue to see improvements, often due to the need to address shortcomings of previous methods. The pioneering approach is AdaGrad, which focuses on the concept of per-parameter adaptive learning rates. The foundation of AdaGrad is also the source of its limitation: the monotonic accumulation of squared gradients that could prematurely stifle learning rates \citep{zeiler2012adadelta, duchi2011adaptive}.

Subsequent optimizers like AdaDelta, RMSprop, and the popular Adam sought to address this issue \citep{kingma2014adam,Tieleman2012rmsprop,zeiler2012adadelta}. AdaDelta introduced a decaying average of past squared gradients, while RMSprop utilized a similar exponential decay mechanism to limit the aggressive reduction in learning rates. Adam combined RMSprop's adaptive learning rates with the concept of momentum for smoother updates. However, Adam has been challenged by AMSGrad which modifies the algorithm in order to imbue it with "long term memory" which addresses issues with sub-optimal convergence under specific conditions and improves empirical performance \citep{reddi2019convergence}. As pointed out by \citep{li2024convergence, shi2021rmsprop, zhang2022adam}, the assumptions on the hyperparameters of Adam were made before constructing the counter examples in \citep{reddi2019convergence}. Insights gained from these examples are invaluable for the ongoing exploration of stochastic gradient descent algorithms.

The literature on adaptive optimization algorithms for SGD reveals that each algorithm addresses the problems evident in its predecessors. The iterative approach, while successful in many ways, arguably emphasizes the lack of fundamental intuition regarding the dynamics of the AdaGrad family of optimizers.

GeoAdaLer is a novel adaptive learning method for SGD, employing the properties of the objective landscape. The innovative idea behind the GeoAdaLer approach is that the acute angle between the normal to the tangent plane at $x$ and the horizontal plane conveys significant curvature-related information. This information could potentially recover the power of second order methods, which are not feasible in large-scale machine learning optimization. An understanding of this geometric approach to analyze adaptive methods for SGD can shed new light on the behavior of other algorithms in the AdaGrad family, potentially revealing the rationale for their strengths and weaknesses.

By discussing the geometric implications of adaptive step processes, we are able to potentially come up with optimizers that:

\begin{itemize} 
    \item \textbf{have more optimal annealing}: A geometric perspective that informs strategies to control learning rate decay more effectively, preventing premature convergence or extensively slow convergence.
    \item \textbf{provide parameter-sensitive adaptivity}: The geometry reveals how to tailor updates for individual parameters in a more principled manner.
    \item \textbf{increase robustness}: Understanding the geometric implications  leads to optimizers that are less dependent on sensitive hyper-parameter tuning.
\end{itemize}

\noindent
In essence, a geometric framework promises to move beyond the reactive development pattern, allowing us to proactively design adaptive optimizers that address the core issues in the AdaGrad family with greater intuition and foresight.

\section{Mathematical Formulation}
\subsection{Deterministic Optimization}
Consider the minimization of the convex objective functional $f: \mathbb{R}^n \to \mathbb{R}$ using the gradient descent algorithm.
The vanilla gradient descent (GD) algorithm that maximizes the benefit of \textit{gradient annealing} for smooth functions is
\begin{align}
x_{t+1} = x_t + \delta x_t
\end{align}
where $\delta x_t = -\gamma g_t$ and $g_t$ is $\nabla f(x_t)$, the gradient of the objective function at time $t$. This is the update step and is largely responsible for how far a step is taken in the descent direction. To see its real contribution, we decompose it further into
\begin{align}
\delta x_t = - \gamma |g_t|\overline{g_t}
\end{align}
where $\gamma$ is the learning rate which is responsible for manually scaling the step size, $|g_t|$ which is our annealing factor and lastly $-\overline{g_t}$ which gives us our optimal descent direction.

It is established that the annealing factor tends to be sub-optimal and can cause overshooting which we need to compensate for by applying a smaller learning rate, increasing convergence time \citep{nar2018step}.
To combat this issue, we propose an annealing factor based on the cosine of $\theta$, where $\theta$ is the acute angle between the normal to the tangent hyperplane and the horizontal hyperplane.
As shown in Figure \ref{theta-visualization}, $\theta$ holds a vital information about the location of the current gradient step on the objective function. We harness this information using the cosine of $\theta$ and prove the following theorem.

\begin{figure}[h!]
    \centering
    \begin{subfigure}{0.43\textwidth}
        \centering
    \includegraphics[width=\linewidth]{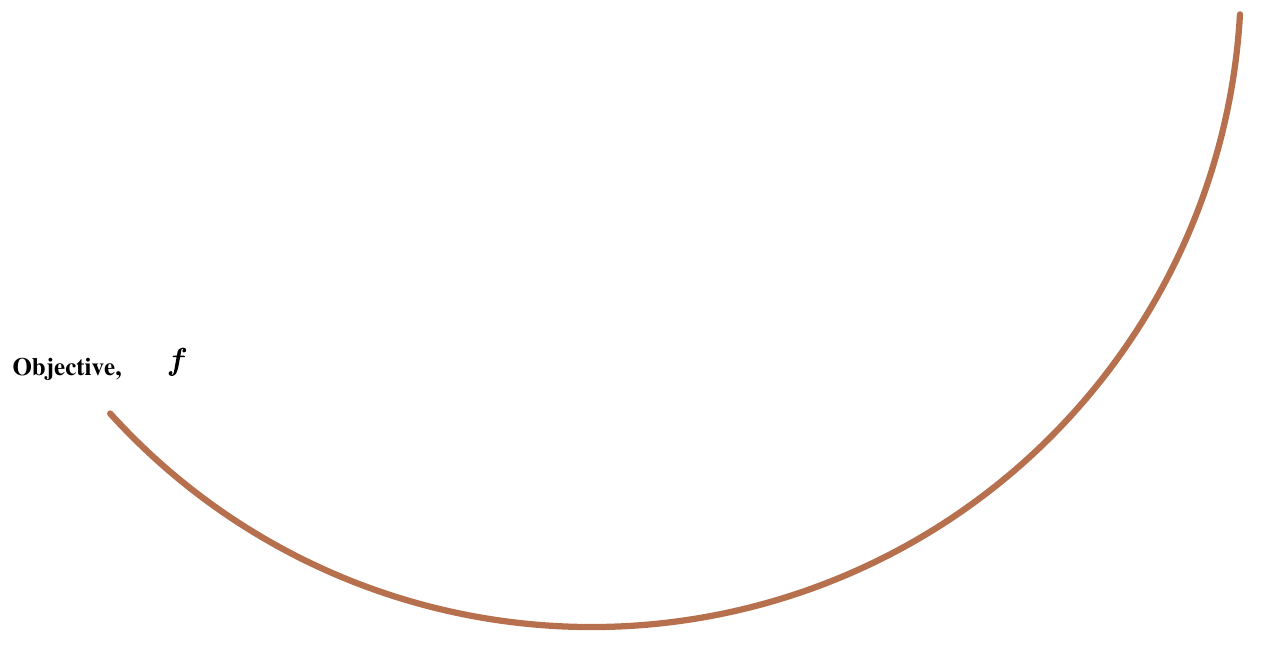}
        \caption{}
    \end{subfigure}
    \hspace{0.08\textwidth}
    \begin{subfigure}{0.45\textwidth}
        \centering
    \includegraphics[width=\linewidth]{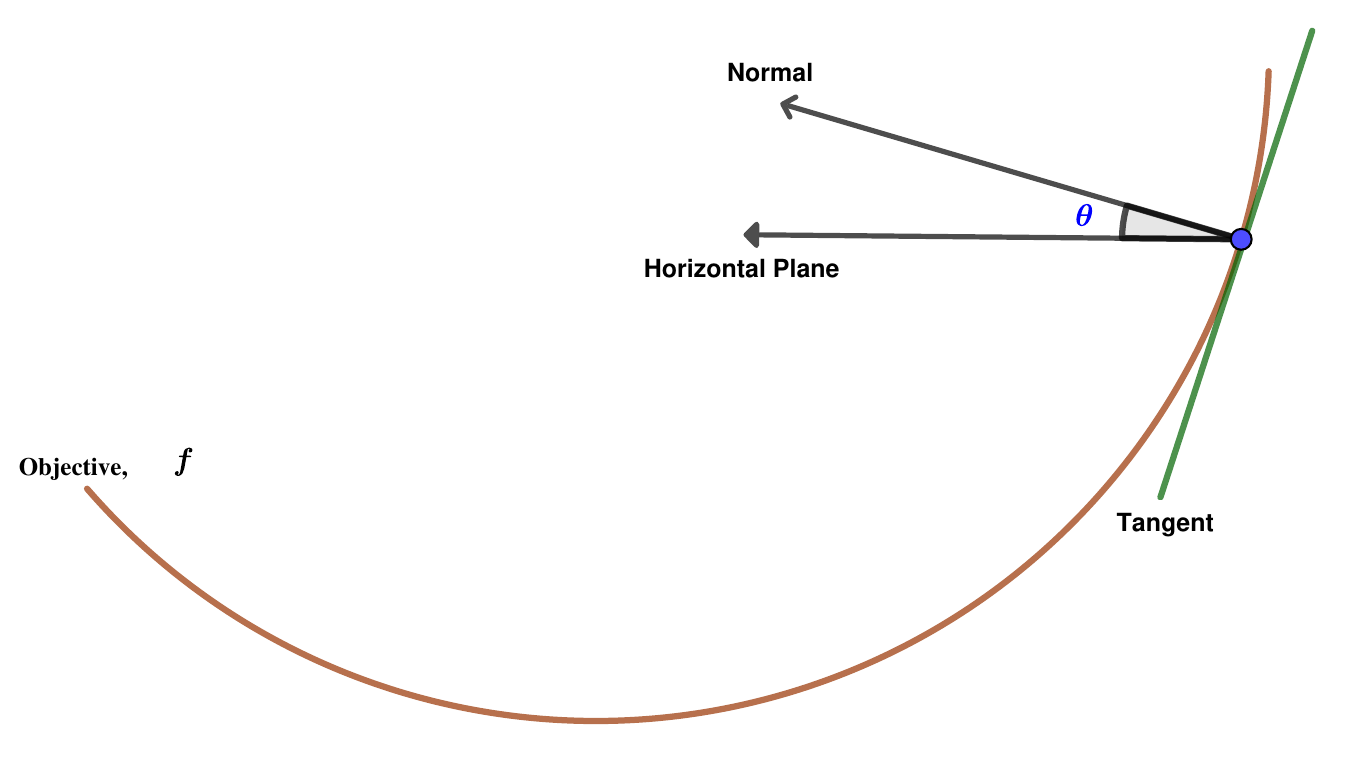}
        \caption{}
    \end{subfigure}
    \hspace{-0.04\textwidth}
    \begin{subfigure}{0.45\textwidth}
        \centering
    \includegraphics[width=\linewidth]{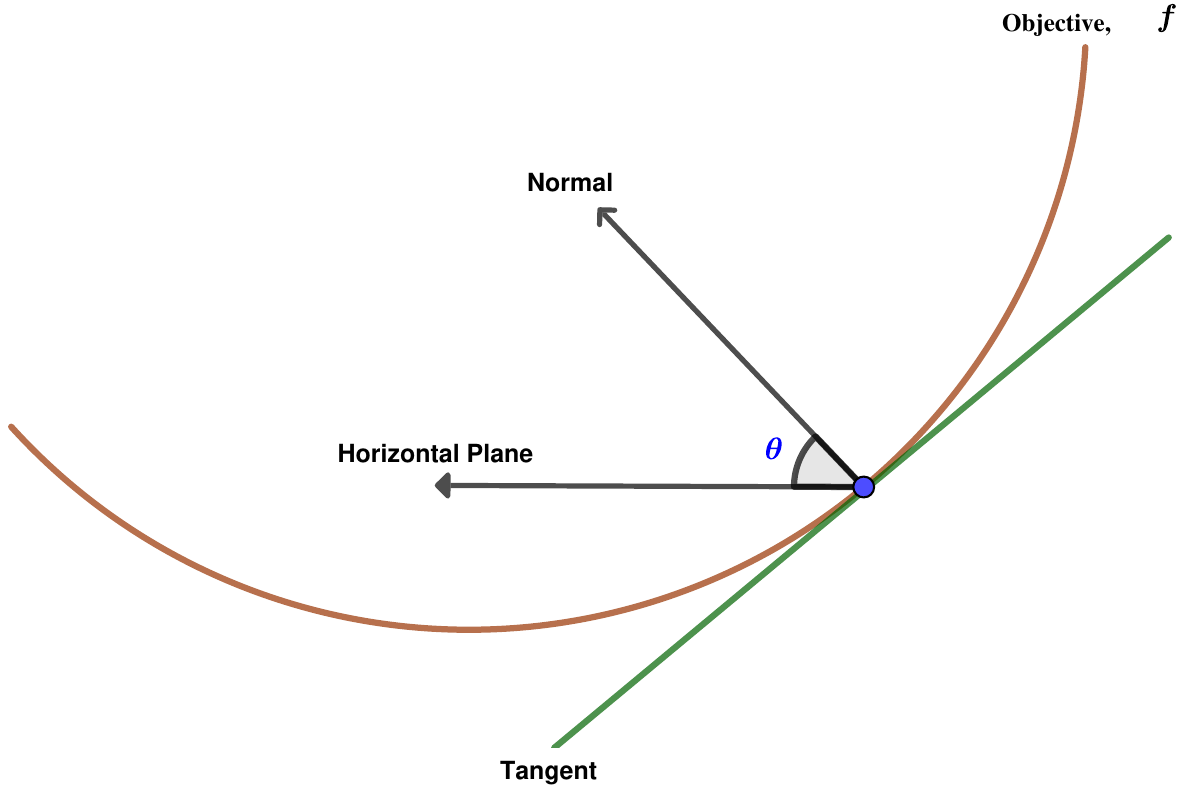}
        \caption{}
    \end{subfigure}
    \hspace{0.03\textwidth}
    \begin{subfigure}{0.45\textwidth}
        \centering
    \includegraphics[width=\linewidth]{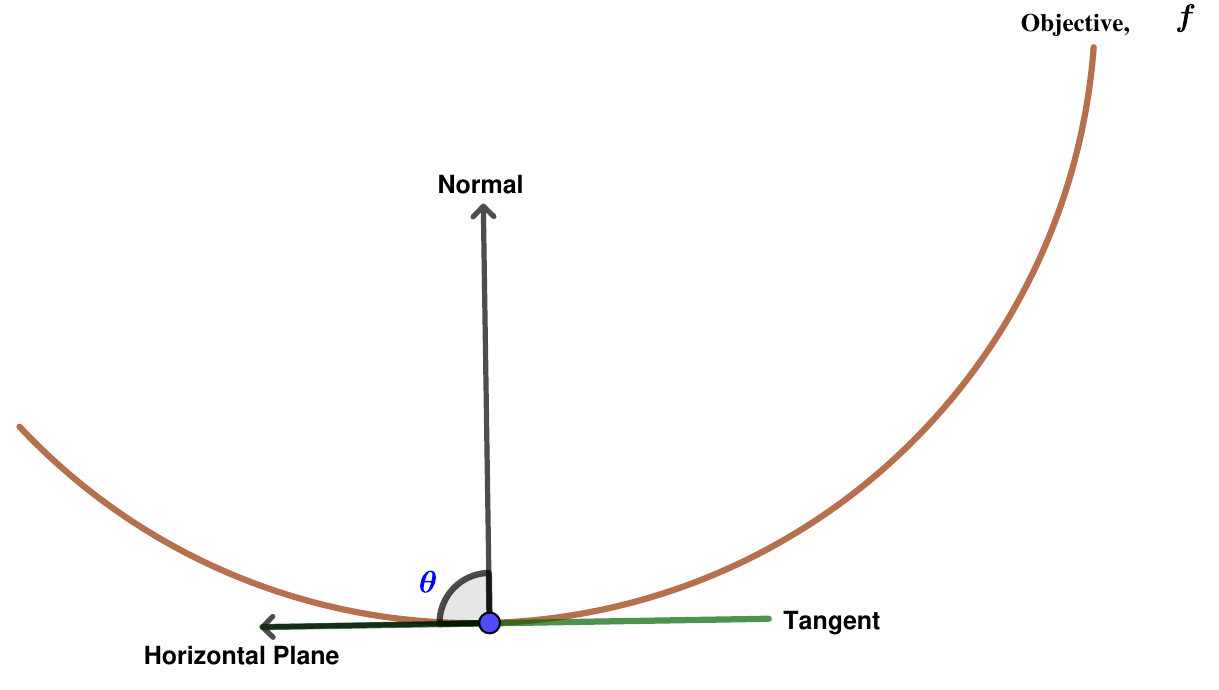}
        \caption{}
    \end{subfigure}
    \caption{Geometric view of $\theta$ as the GeoAdaLer step traverses the objective function.}
    \label{theta-visualization}
\end{figure}

\begin{theorem}[Geohess]\label{geohess}
Let $\theta$ be the acute angle between the normal to an objective function $f: \mathbb{R}^n \to \mathbb{R}$ which is differentiable at $x$. Let $\Vert \cdot \Vert$ be the norm induced by the inner product on $\mathbb{R}^n$.
Then 
\begin{align}
\cos \theta = \frac{\Vert \nabla f(x) \Vert}{\sqrt{\Vert \nabla f(x) \Vert^2 + 1} }
\end{align}
\end{theorem}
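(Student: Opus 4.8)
The plan is to realize the graph of $f$ as a hypersurface $\mathcal{G} = \{(x, f(x)) : x \in \mathbb{R}^n\} \subset \mathbb{R}^{n+1}$ and to carry out the angle computation there, interpreting $\theta$ (as in the introduction) as the acute angle between the normal to the tangent hyperplane of $\mathcal{G}$ at the current point and the horizontal hyperplane. First I would write $\mathcal{G}$ as the zero level set of $F(x,z) = f(x) - z$ on $\mathbb{R}^{n+1}$; since $f$ is differentiable at $x$, the tangent hyperplane to $\mathcal{G}$ at $p = (x, f(x))$ is well-defined and a normal vector to it is $N = \nabla F(p) = (\nabla f(x), -1)$. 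The horizontal hyperplane is $H = \mathbb{R}^n \times \{0\}$, with unit normal $e_{n+1} = (0, \ldots, 0, 1)$.

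Next I would recall that the acute angle $\theta$ between a nonzero vector $N$ and a hyperplane $H$ is, by definition, the complement of the angle between $N$ and the normal of $H$; equivalently, it is the angle between $N$ and its orthogonal projection $\mathrm{proj}_H N$ onto $H$. Here $\mathrm{proj}_H N = (\nabla f(x), 0)$, so when $\nabla f(x) \neq 0$ one computes directly
\[
\cos\theta = \frac{\langle N, \mathrm{proj}_H N\rangle}{\Vert N \Vert \, \Vert \mathrm{proj}_H N \Vert} = \frac{\Vert \nabla f(x)\Vert^2}{\sqrt{\Vert \nabla f(x)\Vert^2 + 1}\, \Vert \nabla f(x)\Vert} = \frac{\Vert \nabla f(x)\Vert}{\sqrt{\Vert \nabla f(x)\Vert^2 + 1}}.
\]
An alternative bookkeeping route is to first compute $\cos\phi = |\langle N, e_{n+1}\rangle| / \Vert N \Vert = 1/\sqrt{\Vert \nabla f(x)\Vert^2 + 1}$ for the acute angle $\phi$ between $N$ and the vertical axis, and then use $\cos\theta = \sin\phi = \sqrt{1 - \cos^2\phi}$; both give the same formula.

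Finally I would dispose of the degenerate case $\nabla f(x) = 0$: then $N$ is vertical, hence orthogonal to $H$, so $\theta = \pi/2$ and both sides of the claimed identity vanish, making the projection formula unnecessary there. The only real subtlety — and the step I would be most careful about — is the orientation and sign bookkeeping that guarantees we extract the \emph{acute} angle (this is why the absolute value appears in the computation of $\cos\phi$), together with pinning down precisely what is meant by "the normal to $f$" and "the horizontal hyperplane"; once those conventions are fixed, the result is a one-line application of the inner-product angle formula.
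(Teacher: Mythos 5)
Your proposal is correct and follows essentially the same route as the paper: both identify the normal to the graph's tangent hyperplane as $(\pm\nabla f(x), \mp 1)$, pair it with a horizontal vector parallel to $\nabla f(x)$ (your $\mathrm{proj}_H N$ is exactly the paper's $[-\nabla f(x_i),0]^T$ up to sign), and apply the inner-product angle formula to obtain $\cos\theta = \Vert\nabla f(x)\Vert/\sqrt{\Vert\nabla f(x)\Vert^2+1}$. Your explicit treatment of the degenerate case $\nabla f(x)=0$ and of the convention defining the angle between a vector and a hyperplane is slightly more careful than the paper's, but it is the same argument.
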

We call Theorem \ref{geohess} \textit{Geohess} since this formulation mimics the curvature information traditionally found in the full hessian matrix. \\

\begin{proof}
Let $x_i$ be an arbitrary point in $\mathbb{R}^n$. Then $[-\nabla f(x_i), 1]^T$ is orthogonal to $[x-x_i, y-f(x_i)]$ where $(x,y)$ lies on the tangent hyperplane to $f$ at $x_i$. i.e., $[-\nabla f(x_i), 1]^T$ is normal to $[x-x_i, y-f(x_i)]$. 
Let $\theta$ be the angle this normal makes with the horizontal hyperplane at the point $(x_i, f(x_i))$ in the descent direction $-\nabla f(x_i)$, i.e., a vector parallel to $[-\nabla f(x_i), 0]$. Then by vector calculus,
\begin{align}
\cos \theta &= \frac{[-\nabla f(x_i), 1]^T \cdot [-\nabla f(x_i), 0]^T}{
\Vert [-\nabla f(x_i), 1]^T\Vert \Vert [-\nabla f(x_i), 0]^T \Vert
}= \frac{\Vert \nabla f(x_i)\Vert}{\sqrt{\Vert \nabla f(x_i)\Vert^2 + 1} }
\end{align}
as required.    
\end{proof}

\noindent
The proposed GeoAdaLer update step is as follows:
\begin{align}\label{geo-update}
\delta x_t = - \gamma \, \overline{g_t} \cos\theta
 = - \gamma \frac{\Vert  g_t\Vert}{\sqrt{\Vert  g_t\Vert^2 + 1} }\overline{g_t}
\end{align}
Equation \eqref{geo-update} follows from the Geohess theorem (Theorem \ref{geohess}).
In equation \eqref{geo-update}, if $g_t$ is zero, then the update step is zero and algorithm has converged. If $g_t$ is not zero, then equation \eqref{geo-update} reduces to
\begin{align}
\delta x_t = - \gamma \frac{1}{\sqrt{\Vert  g_t\Vert^2 + 1} }g_t \nonumber
\end{align}

\subsection{Properties of GeoAdaLer Annealing Factor}
The acute angle $\theta$ depends on $g_t$,
and therefore, the annealing factor possesses the following properties:
\begin{multicols}{2}
\begin{enumerate}
\item $\displaystyle 
\frac{\Vert g_t \Vert}{\sqrt{\Vert g_t \Vert^2 + 1}} \to 1 \qquad \text{as} \quad g_t \to \infty \label{propostion-1}
$

\item $\displaystyle 
\frac{\Vert g_t \Vert}{\sqrt{\Vert g_t \Vert^2 + 1}} \to 0 \qquad \text{as} \quad g_t \to 0  \label{proposition-2}
$

\end{enumerate}
\end{multicols}

\noindent
In practice, we do not want $g_t \to \infty$ but as long as the magnitude of the gradient is large, the sufficiency in item \ref{propostion-1} is guaranteed. Of utmost importance is item \ref{proposition-2}.
In comparison to gradient annealing in SGD, we find that $\cos \theta$ possesses a logarithmic decay as a function of $g_t$ as it tends to an optimum, resulting in a much more controllable annealing while standard SGD  decays linearly.

The algorithm for GeoAdaLer is given in Algorithm \ref{geo-algorithm}. Note: The deterministic setting is achieved by using the objective function over the entire dataset instead of the stochastic objective function, in which case we set $\beta=0$.

\subsection{Stochastic Optimization}
Online learning and stochastic optimization are closely linked and can be essentially used interchangeably \citep{duchi2011adaptive, cesa2004generalization}. 
In online learning, the learner iteratively predicts a point $x_t \in X \subseteq \mathbb{R}^n$, often representing a weight vector that assigns importance values to different features. The objective of the learner is to minimize regret compared to a fixed predictor $x^*$ within the closed convex set $x_t \in X \subseteq \mathbb{R}^n$ across a sequence of functions $\{f_1, f_2,  \cdots   \}$.

\noindent
Geometrically, gradients are vectors and have directions. Under a suitable distribution, a convex hull of these gradients approximates the true gradient for stochastic optimizations.

A common and intuitive approach to approximating the expected gradient is by employing an exponential moving average (EMA). As is the practice in literature \citep{hinton2012neural, kingma2014adam}  we use a momentum-like term to replace the instantaneous gradient, thereby mitigating stochastic fluctuations and revealing underlying trends in the gradient values. That is,
\begin{align}
    m_{t+1}= \beta m_{t}+ (1-\beta) g_t.
\end{align}
where $\beta \in [0, 1)$.
The term $\beta$ is constructed as a weighted average of the historical gradients and the current gradient. This modification enhances our ability to approximate the true underlying gradient more effectively in time.
It also has the ability to update and learn as new streams of data are observed.

\noindent
Algorithm \ref{geo-algorithm} presents a pseudo-code of the proposed GeoAdaLer learning method. 

\begin{algorithm} 
\caption{GeoAdaLer}\label{geo-algorithm}
\begin{algorithmic}
\Require $\gamma$ : Learning rate
\Require $\beta$ : Exponential decay rate for weighted average
\Require $f_t$ : Stochastic objective function
\Require $x_0$ : Initial parameter vector
\State $t \gets 0$
\While{$x_t$ \text{not converged}}
\State $g_t \gets \nabla f_t(x_t)$ (Get gradients w.r.t stochastic objective)
\If {$t = 0$}
\State $m_t \gets g_t$ (initial weighted average)
\Else
\State $m_t=\beta m_{t-1} +(1-\beta)g_t$ (Updated weighted average)
\EndIf
\State$x_{t+1}=x_t-\gamma \cdot m_{t}/(\sqrt{\Vert m_{t}\Vert^2+1})$ (Update parameters)
\State $t \gets t+1$
\EndWhile\\
\Return $x_t$ 
\end{algorithmic}
\end{algorithm}

\subsubsection{GeoAdaMax}
Due to stochasticity and varying frequencies of occurrence of certain model inputs, such as in deep neural networks, adaptive stochastic gradient descent methods sometimes encounter issues with non-increasing squared gradients. 
 For instance, consider the convex objective function presented in \cite{reddi2019convergence}, defined as follows:
\[
f_t(x) = 
\begin{cases}
    Cx, & \text{if } t \mod 3 = 1\\
    -x, & \text{otherwise},
\end{cases}
\]
In such scenarios, the adaptive step employed by the optimizer can still function effectively as a form of annealing, but the vanilla adaptive step leads to a suboptimality.  

To address this problem, an approach was developed by \cite{reddi2019convergence}, which involves retaining the maximum of the normalizing denominator over iterations. This approach reduces to  a self scaling SGD with momentum
\citep{reddi2019convergence}. By implementing a similar idea, we observe related results for GeoAdaLer.  We call this method GeoAdaMax, indicating the use of maximum of the variance term.

GeoAdaMax dynamically adjusts the denominator using the largest historical value of the EMA. When this denominator remains unchanged, the update scale reflects its historical maximum, preserving proportionality in all future updates. This mechanism fine-tunes step sizes while maintaining alignment with past gradient magnitudes. The summary of the algorithm is presented in Algorithm \ref{geomax-algorithm}

Geometrically, this is akin to increasing the angle between the normal and the horizontal plane by using a vector different from the normal. This leads to relatively smaller step sizes than if the original angle were used. Theorem \ref{geoadamax-theta} shows that indeed, the acute angle $\theta$ is increased.

\begin{theorem}\label{geoadamax-theta}
Let $\theta$ be the acute angle between the normal and the horizontal hyperplanes at the current iteration and let $\hat{\theta}$ be the acute angle between the horizontal hyperplane and the normal that maximizes the norm up to the current iteration. Then
\begin{align*}
    \theta \le \hat{\theta}.
\end{align*}
\end{theorem}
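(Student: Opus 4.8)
The plan is to put both angles in closed form via the Geohess theorem (Theorem~\ref{geohess}) and then reduce the claim to the elementary fact that enlarging the denominator of a nonnegative fraction decreases it, combined with the monotonicity of $\cos$ on $[0,\pi/2]$. Throughout, write $\hat v_t := \max_{1\le s\le t}\Vert m_s\Vert^2$ for the running maximum of the ``variance term'' used by GeoAdaMax, so that by construction $\hat v_t \ge \Vert m_t\Vert^2$, and recall that GeoAdaMax differs from GeoAdaLer only in that the update denominator $\sqrt{\Vert m_t\Vert^2+1}$ is replaced by $\sqrt{\hat v_t+1}$ while the direction $\overline{m_t}$ is left unchanged.

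First I would record the current angle: applying Theorem~\ref{geohess} at the present iterate gives $\cos\theta = \Vert m_t\Vert/\sqrt{\Vert m_t\Vert^2+1}$. Next I would make precise which vector $\hat\theta$ is the angle of. Set $b_t := \sqrt{\hat v_t+1-\Vert m_t\Vert^2}$, which is a well-defined real number with $b_t\ge 1$, since $\hat v_t\ge\Vert m_t\Vert^2$ forces $\hat v_t+1-\Vert m_t\Vert^2\ge 1$. The vector $[-m_t,\,b_t]^T$ has the same horizontal component $-m_t$ as the tangent-plane normal $[-m_t,1]^T$ but norm $\sqrt{\hat v_t+1}=\max_{s\le t}\sqrt{\Vert m_s\Vert^2+1}$, i.e.\ the largest normal-norm encountered up to iteration $t$. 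Repeating verbatim the vector-calculus computation from the proof of Theorem~\ref{geohess} (dotting $[-m_t,b_t]^T$ with $[-m_t,0]^T$, the descent direction in the horizontal hyperplane) yields $\cos\hat\theta = \Vert m_t\Vert/\sqrt{\hat v_t+1}$, which is exactly the effective GeoAdaMax annealing factor and matches the geometric description preceding the theorem.

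With both cosines in hand, the comparison is immediate: from $\hat v_t\ge\Vert m_t\Vert^2$ we get $\sqrt{\hat v_t+1}\ge\sqrt{\Vert m_t\Vert^2+1}$, and since the two fractions share the nonnegative numerator $\Vert m_t\Vert$ this gives $\cos\hat\theta\le\cos\theta$. Both $\theta$ and $\hat\theta$ are acute, hence lie in $[0,\pi/2]$, where $\cos$ is strictly decreasing; therefore $\cos\hat\theta\le\cos\theta$ forces $\theta\le\hat\theta$, as claimed. The degenerate case $m_t=0$ is trivial, since then $\cos\theta=\cos\hat\theta=0$ and $\theta=\hat\theta=\pi/2$.

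The only genuine subtlety — the ``hard part'' — is not the inequality, which is a one-line consequence of ``a larger positive denominator gives a smaller fraction'' together with monotonicity of $\cos$, but rather fixing the correct geometric reading of $\hat\theta$: it is the angle of the rescaled normal $[-m_t,b_t]^T$ whose length equals the running maximum of the normal norms, \emph{not} the angle of the historical normal $[-m_{s^{*}},1]^T$ that attains that maximal norm (that latter angle would in fact be \emph{smaller} than $\theta$). Once $\hat\theta$ is correctly identified with the GeoAdaMax annealing factor $\Vert m_t\Vert/\sqrt{\hat v_t+1}$, the rest is routine.
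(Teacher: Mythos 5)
Your proof is correct and follows essentially the same route as the paper's: both write $\cos\theta = \Vert m_t\Vert/\sqrt{\Vert m_t\Vert^2+1}$ and $\cos\hat\theta = \Vert m_t\Vert/\sqrt{\max_{s\le t}\Vert m_s\Vert^2+1}$, compare the two via the larger denominator, and conclude by monotonicity of the inverse cosine. The only difference is that you spend more care pinning down which vector $\hat\theta$ is the angle of (the rescaled normal $[-m_t, b_t]^T$ rather than a historical normal), a point the paper's proof simply asserts; this is a useful clarification but not a different argument.
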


\begin{proof}
 By Algorithm \ref{geo-algorithm} and Theorem \ref{geohess},
 \begin{align*}
\cos \theta = \frac{\Vert m_t \Vert}{\sqrt{\Vert m_t \Vert^2 + 1}} 
\ge \frac{\Vert m_t \Vert}{\sqrt{\underset{t}{\max} \Vert m_t \Vert^2 + 1}}
= \cos \hat{\theta}
 \end{align*}
 Applying the inverse cosine function on both sides gives the desired result since $\cos^{-1}$ is monotone decreasing on $[0,1]$.
\end{proof}

\begin{algorithm} 
\caption{GeoAdaMax}
\label{geomax-algorithm}
\begin{algorithmic}
\Require $\gamma$: Learning rate
\Require $\beta$: Exponential decay rate for weighted average
\Require $f_t$: Stochastic objective function
\Require $x_0$: Initial parameter vector
\State $t \gets 0$
\State $u_t \gets 0$
\While{$x_t$ \text{not converged}}
\State $g_t \gets \nabla f_t(x_t)$ (Get gradients w.r.t stochastic objective)
\If {$t = 0$}
\State $m_t \gets g_t$ (initial weighted average)
\Else
\State $m_t=\beta m_{t-1} +(1-\beta)g_t$ (Update weighted average)
\EndIf
\State $u_t=max(\Vert m_{t}\Vert^2+1,u_{t-1})$
\State $x_{t+1}=x_t-\gamma \cdot m_{t}/(\sqrt{u_t})$ (Update parameters)
\State $t \gets t+1$
\EndWhile\\
\Return $x_t$ 
\end{algorithmic}
\end{algorithm}

\section{Relationship to the AdaGrad Family}

The AdaGrad family of methods adjusts learning rates based on the accumulation of past gradients. These methods dynamically adapt the step size during optimization to handle varying gradient magnitudes across dimensions.
For a given time step $t$, the learning rate adjustment in these methods depends on the accumulated gradient information, which we represent as $G_t$. Below is a summary of how $G_t$ is defined for the main methods in the AdaGrad family:

\textbf{AdaGrad}: For AdaGrad, $G_t$ is given as
    \[
    G_t = \sum_{i=1}^t g_i^2
    \]
    where $g_i$ is the gradient at step $i$. AdaGrad accumulates the squared gradients over time, leading to decreasing learning rates \citep{duchi2011adaptive}.
    
\textbf{RMSProp}: The $G_t$ step in RMSProp involves exponential moving average and is given as
    \[
    G_t = \beta G_{t-1} + (1 - \beta) g_t^2.
    \]
    RMSProp introduces an exponential decay factor $\beta$, preventing the learning rate from decreasing too quickly by giving more weight to recent gradients \citep{hinton2012neural}.
    
AdaGrad and RMSProp modify the learning rate as follows:
\[
x_{t+1} = x_t - \frac{\gamma}{\sqrt{G_t + \epsilon}} \nabla f_t(x_t),
\]
where $\eta$ is the global learning rate and $\epsilon$ is a small constant for numerical stability.

\textbf{Adam}: In the case of Adam, an adjustment is not only made to the squared gradients but also to the gradients where a bias-corrected moving average is applied to each:
\begin{align*}
    m_t &= \beta_1 m_{t-1} + (1 - \beta_1) g_t\\
    G_t &= \beta_2 G_{t-1} + (1 - \beta_2) g_t^2
\end{align*} 
where $m_t$ and $G_t$ are the moving average of the gradients and the squared gradients respective, $\hat{m}_t = m_t/(1-\beta_1^t)$ and $\hat{G}_t = G_t/(1-\beta_2^t)$ are the bias-corrected moving average of gradients and  squared gradients respectively. Adam combines the ideas of momentum and adaptive learning rates for smoother updates \citep{kingma2014adam}.

The Adam's update rule therefore is as follows:
\[
x_{t+1} = x_t - \frac{\gamma}{\sqrt{\hat{G}_t} + \epsilon}\hat{m}_t,
\]

\subsection{GeoAdaLer in Comparison}
The core of GeoAdaLer’s update is based on the cosine of the angle $\theta$ between the normal to the gradient and the horizontal hyperplane. It effectively uses the geometry of the optimization landscape to inform its adaptivity. Hence, with an EMA update
\[
m_t = \beta m_{t-1} + (1 - \beta) g_t,
\]
GeoAdaLer updates as:
\[
x_{t+1} = x_t - \frac{\gamma}{\sqrt{\|m_t\|^2 + 1}} m_t.
\]

\noindent
As shown above, GeoAdaLer's update is similar to that of the AdaGrad family with some function of the squared gradient playing a big role in the optimization step. The main differences include:

\begin{enumerate}[label=(\alph*)]
    \item The use of norm-based scaler
    \item The stability term in GeoAdaLer is naturally derived from the choice of the reference plane, for example with the choice of the normal to the gradient, our scaler produces a stability term of 1. 
    \item There is only one moving average which is used in the estimation of the gradient and the function of the squared gradients
    \item Directly comparing to Adam, we also agree that $G_t$ is always bigger for Adam  since by Jensen's inequality,
    $$
    \left[ \beta m_{t-1} + (1-\beta) g_t \right]^2 \le \beta m_{t-1}^2 + (1-\beta) g_t^2,
    $$
    where the LHS $G_t$ is for GeoAdaLer and  the RHS for Adam.
\end{enumerate} 

\noindent
We remark that all of these differences stem from the geometric intuition behind GeoAdaLer, which we believe lends itself to better understanding of the optimization paths as well as interpreting optimal values. Also, as noted in \cite{ward2020adagrad}, the use of norm-based adaptivity ensures GeoAdaLer is robust to its hyperparameters.

\section{Convergence Analysis}
\subsection{Deterministic Setting}
We analyze the convergence of GeoAdaLer, first for the deterministic case, and then for the stochastic setting. Our setup remains the same, namely:
\begin{align}\label{optim-conv}
    \underset{x}{\text{minimize}} \, f(x)
\end{align}
using the new adaptive gradient descent (GeoAdaLer)  method where $f:\mathbb{R}^n  \to \mathbb{R}$ is an objective function. We recast the optimization problem \eqref{optim-conv} into a fixed point iteration. Let $T:\mathbb{R}^n \to \mathbb{R}^n$ be a nonlinear operator defined as:

\begin{align}\label{operator}
  Tx  &= \left( I - \gamma\frac{\nabla f}{\sqrt{\Vert \nabla f \Vert^2 + 1} } \right) (x)
\end{align}
where $I$ is the identity map.

\begin{theorem}\label{geo-deterministic}
Let $f:\mathbb{R}^n \to \mathbb{R}$ be continuous and $\nabla f$ Lipschitz continuous with $\gamma \le \frac{1}{L} $ where $L$ is the Lipschitz constant for $\nabla f$. Assume $f$ attains an optimal value at $x^* = \arg\min_x f(x)$. Then $T$ defined in \eqref{operator} is a contraction map with contraction parameter $\alpha = \sqrt{1 + \gamma^2 L_G^2 - 2\gamma \frac{L_G^2}{L}} < 1$ where $L_G$ is the Lipschitz constant for 
$\displaystyle \frac{\nabla f}{\sqrt{\Vert \nabla f \Vert^2 + 1}} $ and $\Vert \cdot \Vert$ is the Euclidean norm in $\mathbb{R}^n$. That is
\begin{align}
\Vert Tx - Ty \Vert \le \alpha \Vert x - y \Vert.
\end{align}
\end{theorem}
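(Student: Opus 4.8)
The plan is to write $T = I - \gamma G$, where $G \coloneqq \nabla f/\sqrt{\lVert\nabla f\rVert^2+1}$ is the normalized-gradient field appearing in the GeoAdaLer update \eqref{geo-update}, and then run the classical ``identity minus a step times a co-coercive operator is a contraction'' computation. Concretely, for arbitrary $x,y\in\mathbb{R}^n$ I would expand
\begin{align}
\lVert Tx - Ty\rVert^2 &= \lVert (x-y) - \gamma\bigl(G(x)-G(y)\bigr)\rVert^2 \nonumber\\
&= \lVert x-y\rVert^2 - 2\gamma\,\langle x-y,\; G(x)-G(y)\rangle + \gamma^2\lVert G(x)-G(y)\rVert^2 .
\end{align}
The quadratic term is controlled immediately by the hypothesis that $G$ is $L_G$-Lipschitz: $\lVert G(x)-G(y)\rVert^2 \le L_G^2\lVert x-y\rVert^2$. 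Hence the entire statement reduces to the cross-term estimate $\langle x-y,\;G(x)-G(y)\rangle \ge \tfrac{L_G^2}{L}\lVert x-y\rVert^2$; once that is available, substitution yields $\lVert Tx-Ty\rVert^2 \le \bigl(1+\gamma^2L_G^2 - 2\gamma\tfrac{L_G^2}{L}\bigr)\lVert x-y\rVert^2 = \alpha^2\lVert x-y\rVert^2$.

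For the cross-term lower bound I would exploit that $f$ is convex with $L$-Lipschitz gradient, so $\nabla f$ is $\tfrac1L$-co-coercive (Baillon--Haddad / co-coercivity, as flagged in the introduction), and try to transfer this monotonicity to $G$. The useful structural remark is that $v\mapsto v/\sqrt{\lVert v\rVert^2+1}$ is the gradient of the convex function $v\mapsto\sqrt{\lVert v\rVert^2+1}$, with Jacobian symmetric, positive semidefinite, and of operator norm at most $1$; thus it is $1$-Lipschitz, and composing with $\nabla f$ gives $L_G\le L$. The delicate point is that composition of monotone/co-coercive maps is not automatically monotone, so to push the $\tfrac1L$-co-coercivity of $\nabla f$ through the normalization I would most likely argue via the integral form $G(x)-G(y)=\int_0^1 Dg(\nabla f(z_t))\,\nabla^2 f(z_t)\,(x-y)\,dt$ (with $z_t = y+t(x-y)$) and control the relevant quadratic form using the spectral properties of $Dg$ and $\nabla^2 f$, then combine the resulting co-coercivity of $G$ with the Lipschitz bound $\lVert G(x)-G(y)\rVert\le L_G\lVert x-y\rVert$ to obtain exactly $\langle x-y,\;G(x)-G(y)\rangle\ge\tfrac{L_G^2}{L}\lVert x-y\rVert^2$. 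I expect this transfer — getting the co-coercivity constant of $G$ to come out with the $1/L$ factor, i.e.\ correctly linking $L_G$ and $L$ — to be the main obstacle; everything else is bookkeeping.

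Finally I would verify that $\alpha = \sqrt{1+\gamma^2 L_G^2 - 2\gamma L_G^2/L}$ is a genuine contraction factor. Since $\alpha^2 - 1 = \gamma L_G^2\,(\gamma - 2/L)$ and the hypothesis gives $0<\gamma\le 1/L < 2/L$, the right-hand side is strictly negative, so $\alpha<1$; and because $L_G\le L$ and $\gamma\le 1/L$ we have $\alpha^2 \ge 1 - L_G^2/L^2 \ge 0$, so $\alpha$ is well defined. Taking square roots in $\lVert Tx-Ty\rVert^2\le\alpha^2\lVert x-y\rVert^2$ then gives $\lVert Tx-Ty\rVert\le\alpha\lVert x-y\rVert$, i.e.\ $T$ is a contraction with parameter $\alpha$.
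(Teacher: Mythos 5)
Your skeleton is the same as the paper's: write $T=I-\gamma G$ with $G=\nabla f/\sqrt{\|\nabla f\|^2+1}$, expand $\|Tx-Ty\|^2$, control the quadratic term by the $L_G$-Lipschitz property of $G$, and feed a monotonicity estimate into the cross term. The paper does exactly this, first establishing co-coercivity of the normalized field (its Lemma~\ref{cocoer}, proved via the scalar potential $F(z)=\int_a^z G(u)\,du$ and the quadratic-upper-bound lemma), which yields $\|Tx-Ty\|^2\le\|x-y\|^2+\bigl(\gamma^2-\tfrac{2\gamma}{L_G}\bigr)\|G(x)-G(y)\|^2$. So your identification of the cross term as the crux is correct.

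The gap is that the estimate you reduce everything to, $\langle x-y,\,G(x)-G(y)\rangle\ge\tfrac{L_G^2}{L}\|x-y\|^2$, is a \emph{strong monotonicity} statement and is false under the stated hypotheses. Co-coercivity only gives $\langle x-y,\,G(x)-G(y)\rangle\ge\tfrac{1}{L_G}\|G(x)-G(y)\|^2$, and the right-hand side can vanish while $x\ne y$: take $f(x_1,x_2)=x_1^2$ on $\mathbb{R}^2$ with $x=(0,0)$, $y=(0,1)$; then $G(x)=G(y)=0$, so the inner product is $0$ while $\tfrac{L_G^2}{L}\|x-y\|^2>0$ (and indeed $\|Tx-Ty\|=\|x-y\|$ there, so no strict contraction is possible). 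Your proposed derivation --- ``combine the co-coercivity of $G$ with $\|G(x)-G(y)\|\le L_G\|x-y\|$'' --- chains a lower bound with an upper bound in the wrong direction: from $c\ge\tfrac1{L_G}a$ and $a\le L_G^2 b$ one cannot conclude $c\ge\tfrac{L_G^2}{L}b$. The Jacobian/integral route does not rescue this, since $Dg(\nabla f(z_t))\,\nabla^2 f(z_t)$ can be singular when $f$ is merely convex, so no positive lower bound on the quadratic form in terms of $\|x-y\|^2$ is available without strong convexity. For what it is worth, the paper's own final two displayed inequalities commit the mirror image of this error (multiplying the negative coefficient $\gamma^2-\tfrac{2\gamma}{L_G}$ by the Lipschitz \emph{upper} bound $L_G^2\|x-y\|^2$); the honest conclusion obtainable from co-coercivity alone is nonexpansiveness, $\alpha\le 1$, and the strict contraction with the stated $\alpha$ requires an additional strong-convexity-type assumption.
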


\noindent
A critical component of Theorem \ref{geo-deterministic} involves demonstrating that the mapping $T$ is a contraction. This allows us to invoke the Banach Fixed Point Theorem, which asserts  that any contraction mapping on a complete metric space possesses a unique fixed point \citep{chidume2009geo, rudin1976principles, sutherland2009introduction, granas2003fixed}. By iteratively applying the contraction map, starting from an initial point, we ensure convergence to this fixed point. The recursive iterations converge to the unique fixed point with a geometric rate of convergence $\alpha$. This fixed point corresponds to the minimizer we are seeking \citep{boyd2016primer}. 
We remark that Theorem \ref{geo-deterministic} holds if we replace continuity of $f$ with lower semicontinuity and gradient with subgradient.

\subsection{Stochastic Setting}
We examine the convergence properties of the GeoAdaLer optimizer within the framework of online learning, as originally introduced in \cite{zinkevich2003online}. This framework involves a sequence of convex cost functions $ \{f_1, f_2, \ldots, f_T\} $, each of which becomes known only at its respective timestep. The objective at each step $ t $ is to estimate the parameter $ x_t $ and evaluate it using the newly revealed cost function $ f_t $. Given the unpredictable nature of the sequence, we assess the performance of our algorithm by computing the regret. Regret is defined as the cumulative sum of the differences between the online predictions $ f_t(x_t) $ and the optimal fixed parameter $ f_t(x^*) $ within a feasible set $ X $ across all previous time-steps. Specifically, the regret is defined as follows \citep{zinkevich2003online,kingma2014adam}:
\begin{align}
R(T) = \sum_{t=1}^T \left(f_t(x_t) - f_t(x^*) \right)
\end{align}
where the optimal parameter $x^*$ is determined by $x^* = \arg\min_{x \in X} \sum_{t=1}^T f_t(x)$. We demonstrate that GeoAdaLer achieves a regret bound of $O(\sqrt{T})$, with a detailed proof provided in the appendix \ref{convergence-proof}. This result aligns with the best known bounds for the general convex online learning problem. We carry over all notations from the deterministic setting. Assuming the learning rate $\gamma_t$ is of order $O(t^{-1/2})$ and $\beta_t$ is exponentially decaying with exponential constant $\lambda$ very close to $1$, we obtain the following regret bounds for online learning with GeoAdaLer algorithm.

\begin{theorem}\label{geo-convergence}
For all \( x \in \mathbb{R}^n \), and $t \le T$, assume the gradient norm \(\Vert \nabla f_t(x)\Vert \leq G\) . Let \(\gamma_t = \frac{\gamma}{\sqrt{t}}\), \(\beta_t = \beta \lambda^t\), \(\lambda \in (0,1)\), and \(\beta \in [0,1)\). For any \( k \in \{1, \ldots, T\} \), the separation between any point \( x_k \) generated by GeoAdaLer and the minimizer of an offline objective computed after all data is known is bounded as \(\Vert x_k - x^*\Vert \leq G\). Then, for any \( T \geq 1 \), GeoAdaLer Algorithm achieves the regret bound:
\begin{align}\label{geo-bound}
R(T) \leq \frac{D^2 \sqrt{G^2 +1} \sqrt{T} + G  (2\sqrt{T}-1)}{2(1-\beta)} + \frac{D G \beta(1-\lambda^T)}{(1-\beta)(1-\lambda)}
\end{align}
\end{theorem}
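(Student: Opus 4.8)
The plan is to follow the standard Zinkevich-style regret analysis, adapted to the GeoAdaLer update rule, and split the cumulative regret into a ``potential'' telescoping part and a ``gradient'' part controlled by the bounded-gradient assumption. The starting observation is that convexity of each $f_t$ gives $f_t(x_t) - f_t(x^*) \le \langle g_t, x_t - x^* \rangle$, so it suffices to bound $\sum_{t=1}^T \langle g_t, x_t - x^*\rangle$. The next step is to relate $g_t$ to the actual increment used by the algorithm. Since the update is $x_{t+1} = x_t - \gamma_t\, m_t / \sqrt{\Vert m_t\Vert^2 + 1}$, I would write $g_t$ in terms of $m_t$ via the EMA recursion $m_t = \beta_t m_{t-1} + (1-\beta_t) g_t$, i.e. $g_t = \frac{1}{1-\beta_t}\big(m_t - \beta_t m_{t-1}\big)$, and substitute to split $\langle g_t, x_t-x^*\rangle$ into a term involving $\langle m_t, x_t - x^*\rangle$ (the ``main'' term) and a correction term $\frac{\beta_t}{1-\beta_t}\langle m_{t-1}, x_t-x^*\rangle$ that will be absorbed into the last summand of \eqref{geo-bound}.

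For the main term, I would use the identity obtained from expanding $\Vert x_{t+1} - x^*\Vert^2 = \Vert x_t - x^* \Vert^2 - 2\gamma_t \frac{\langle m_t, x_t - x^*\rangle}{\sqrt{\Vert m_t\Vert^2+1}} + \gamma_t^2 \frac{\Vert m_t\Vert^2}{\Vert m_t\Vert^2 + 1}$, which rearranges to
\begin{align*}
\langle m_t, x_t - x^* \rangle = \frac{\sqrt{\Vert m_t\Vert^2+1}}{2\gamma_t}\Big(\Vert x_t - x^*\Vert^2 - \Vert x_{t+1} - x^*\Vert^2\Big) + \frac{\gamma_t}{2}\frac{\Vert m_t\Vert^2}{\sqrt{\Vert m_t\Vert^2+1}}.
\end{align*}
Summing over $t$, the first group telescopes after an Abel summation that pairs $\Vert x_t - x^*\Vert^2$ with the difference of consecutive coefficients $\sqrt{\Vert m_t\Vert^2+1}/(2\gamma_t)$; here I would use $\Vert x_t - x^*\Vert \le D$ (writing $D$ for the feasible-set diameter, consistent with \eqref{geo-bound}) together with $\gamma_t = \gamma/\sqrt t$ and $\Vert m_t\Vert \le G$ (which follows from $\Vert g_t\Vert \le G$ and convexity of the norm applied to the EMA) to bound it by roughly $\frac{D^2\sqrt{G^2+1}\sqrt T}{2\gamma}$. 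The second group is bounded using $\Vert m_t\Vert^2/\sqrt{\Vert m_t\Vert^2+1} \le \Vert m_t\Vert \le G$ and $\sum_{t=1}^T \gamma_t = \gamma\sum_{t=1}^T t^{-1/2} \le \gamma(2\sqrt T - 1)$, giving a $\frac{\gamma G(2\sqrt T-1)}{2}$ contribution. Dividing through by $(1-\beta)$ — which appears because of the $1/(1-\beta_t) \le 1/(1-\beta)$ factor from inverting the EMA — recovers the first bracket of \eqref{geo-bound} (up to the placement of $\gamma$, which I expect is normalized to $1$ or folded into the constants in the paper's statement).

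For the correction term, I would bound $\langle m_{t-1}, x_t - x^*\rangle \le \Vert m_{t-1}\Vert\, \Vert x_t - x^*\Vert \le G D$ (using the hypothesis $\Vert x_k - x^*\Vert \le G$ in the form of a diameter bound, or $D$ for it) and then sum the geometric-type weights $\frac{\beta_t}{1-\beta_t} \le \frac{\beta \lambda^t}{1-\beta}$, so that $\sum_{t=1}^T \frac{\beta\lambda^t}{1-\beta} = \frac{\beta(\lambda - \lambda^{T+1})}{(1-\beta)(1-\lambda)} \le \frac{\beta(1-\lambda^T)}{(1-\beta)(1-\lambda)}$, yielding exactly the second summand $\frac{DG\beta(1-\lambda^T)}{(1-\beta)(1-\lambda)}$. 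The main obstacle I anticipate is the Abel/telescoping bookkeeping in the first group: one must check that the coefficient sequence $c_t = \sqrt{\Vert m_t\Vert^2+1}/(2\gamma_t)$ is nondecreasing (so that $\sum_t (c_t - c_{t-1})\Vert x_t - x^*\Vert^2 \le D^2 c_T$), which is not automatic since $\Vert m_t\Vert$ can fluctuate even though $1/\gamma_t = \sqrt t/\gamma$ increases; handling this likely requires either bounding each $c_t \le \sqrt{G^2+1}\sqrt t/(2\gamma)$ and arguing more carefully, or using the uniform bound $\sqrt{\Vert m_t\Vert^2+1} \le \sqrt{G^2+1}$ from the outset so that the telescoping sum is dominated by $\frac{\sqrt{G^2+1}}{2\gamma}\sum_t \sqrt t\,(\text{increments})$, which is the route I would take. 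A secondary technical point is justifying $\Vert m_t\Vert \le G$ uniformly in $t$, which I would do by induction on the EMA recursion using $\beta_t \in [0,1)$ and $\Vert g_t\Vert \le G$.
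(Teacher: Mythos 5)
Your proposal is correct and follows essentially the same route as the paper's proof: invert the EMA to isolate $g_t\cdot(x_t-x^*)$, expand $\Vert x_{t+1}-x^*\Vert^2$ to get the telescoping identity, bound the coefficients uniformly by $\sqrt{G^2+1}/(2\gamma_t)$ before telescoping (exactly how the paper sidesteps the non-monotonicity issue you flag), apply Cauchy--Schwarz plus the geometric sum $\sum_t \beta_t/(1-\beta_t)$ for the correction term, and use the integral test for $\sum_t t^{-1/2}$. One small point in your favor: bounding $\Vert m_t\Vert^2/\sqrt{\Vert m_t\Vert^2+1}\le\Vert m_t\Vert\le G$ yields the $G(2\sqrt{T}-1)$ term as stated in the theorem, whereas the paper's own proof uses $\le\Vert m_t\Vert^2\le G^2$ and ends with $G^2(2T-1)$, which does not match its stated bound.
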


\noindent
From Theorem \ref{geo-convergence}, we observe that the GeoAdaLer algorithm achieves a sublinear regret bound of \(O(\sqrt{T})\) over \( T \) iterations, which is consistent with the results typically found in the literature for similar algorithms. Our proof, akin to the approach taken in the Adam algorithm \citep{kingma2014adam}, relies significantly on the decay of $\beta_t$ to ensure convergence.

In contrast to other adaptive stochastic gradient descent methods that adapts the current employs norm based scaling. This approach ensures that each parameter benefits from the collective dynamics of all parameters at the current time while retaining historical information in the exponential moving average of the gradients used for the adaptation.
We remark that the convergence analysis for GeoAdaMax follows trivially from Theorem \ref{geo-convergence}.

\begin{corollary}
Assume that for any \(x \in \mathbb{R}^n\), the function \(f_t\) is convex and satisfies the gradient bounds \(\|\nabla f_t(x)\|_2 \leq G\). Also, assume that the distance between any parameter \(x_k\) generated by GeoAdaLer Algorithm and the minimizer of an offline objective computed after all data is known is bounded, namely \(\|x_k - x^*\|_2 \leq D\) for any \(k \in \{1, \cdots, T\}\). Then, GeoAdaLer achieves the following guarantee for all \(T \geq 1\):

\[
\limsup_{T \to \infty} \frac{R(T)}{T} \le 0
\]
\end{corollary}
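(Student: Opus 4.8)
The plan is to read this off directly from the regret bound in Theorem~\ref{geo-convergence}. First I would observe that the hypotheses of the corollary—convexity of each $f_t$, the uniform gradient bound $\Vert\nabla f_t(x)\Vert_2 \le G$, and the distance bound $\Vert x_k - x^*\Vert_2 \le D$—together with the schedules $\gamma_t = \gamma/\sqrt{t}$ and $\beta_t = \beta\lambda^t$ with $\lambda\in(0,1)$, $\beta\in[0,1)$ are precisely what is needed to invoke Theorem~\ref{geo-convergence}. Applying it yields, for every $T\ge 1$, the bound \eqref{geo-bound}, namely
\begin{align*}
R(T) \le \frac{D^2\sqrt{G^2+1}\,\sqrt{T} + G(2\sqrt{T}-1)}{2(1-\beta)} + \frac{DG\beta(1-\lambda^T)}{(1-\beta)(1-\lambda)}.
\end{align*}

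Next I would divide through by $T$ and track the asymptotics of each group of terms. The pieces proportional to $\sqrt{T}$ become $O(T^{-1/2})$ after division; the constant $-G/(2(1-\beta))$ sitting in the numerator becomes $O(T^{-1})$; and the final term is bounded above by the fixed constant $DG\beta/((1-\beta)(1-\lambda))$ for all $T$, since $\lambda\in(0,1)$ forces $0<1-\lambda^T<1$, so after division it is also $O(T^{-1})$. Therefore $R(T)/T$ is dominated by a quantity tending to $0$ as $T\to\infty$. Taking $\limsup_{T\to\infty}$ of both sides and using monotonicity of $\limsup$ together with the convergence of the upper bound to $0$ gives $\limsup_{T\to\infty} R(T)/T \le 0$, as claimed.

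I do not expect a substantive obstacle: the corollary is simply the asymptotic statement packaged inside the $O(\sqrt{T})$ regret bound. The only points deserving a little care are (i) keeping the roles of $G$ and $D$ consistent with the statement of Theorem~\ref{geo-convergence} so that its invocation is legitimate under the corollary's hypotheses, and (ii) noting that \eqref{geo-bound} is only an upper bound on $R(T)$—the true regret may be negative when $x^*$ fails to minimize the individual $f_t$—so the conclusion is the one-sided inequality $\limsup_{T\to\infty} R(T)/T \le 0$ rather than an equality.
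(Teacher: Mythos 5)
Your proposal is correct and follows exactly the paper's own route: the authors likewise obtain the corollary by dividing the bound \eqref{geo-bound} from Theorem \ref{geo-convergence} by $T$ and taking the limit superior, with every term on the right-hand side being $O(T^{-1/2})$ or $O(T^{-1})$ after division. Your added remarks on the one-sidedness of the conclusion (since $R(T)$ may be negative) match the discussion the paper gives immediately after the corollary.
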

\noindent
The corollary is derived by dividing the result in Theorem \ref{geo-convergence} by $T$ and applying the limit superior operation to both sides of the inequality \ref{geo-bound}. It is important to note that when \(R(T)\) yields a negative value, it indicates a favorable performance of the iterates produced by the GeoAdaLer algorithm. Specifically, such results suggest that the algorithm's execution leads to an expected loss that is lower than that of the best possible offline algorithm, which has full foresight of all cost functions and selects for a single optimal vector as proposed in \cite{zinkevich2003online}.

\section{Experiments}\label{experiments}

We compare GeoAdaLer to other algorithms such as Adam, AMSGrad, and SGD with momentum. This comparison aims to assess how the geometric approach performs against these popular methods on the standard CIFAR-10 and MNIST datasets. Additionally, we examine how key hyperparameters influence the convergence of GeoAdaLer across different datasets. 

The hyperparameter settings for the algorithms are detailed as follows. These settings represent default values unless otherwise specified as recommended in the literature or considered standard for their respective packages. For SGD, the learning rate was set to $0.01$, momentum to $0.9$ and dampening to $0.9$. For Adam \& AMSGrad, the learning rate was set to $0.001$, $\beta_1$ to $0.9$ and $\beta_2$ to $0.99$. All CPU calculations were performed on an AMD Ryzen 8-core CPU, and GPU calculations were conducted on a NVIDIA 3080ti.

\subsection{MNIST Dataset}
 
In the MNIST \citep{lecun1998} experiment we trained a fully connected feed forward neural networks . It consists of three fully connected layers: the first layer takes in 784 input features (flattened 28x28 grayscale images) and outputs 128 features, the second layer reduces these to 64 features, and the final layer outputs 10 logits corresponding to class scores. Each of the first two layers is followed by a ReLU activation function. The final layer provides raw class scores. Using cross-entropy loss, GeoAdaLer and GeoAdaMax algorithms alongside the baseline optimizers were executed for 150 epochs and for 30 different weights initializations. Figure \ref{fig1} and Table \ref{MNIST_table} illustrate the averaged results on the test dataset. GeoAdaLer demonstrates comparable initial performance to algorithms such as Adam and SGD, yet it achieves better long-run performance, converging to a higher validation accuracy. GeoAdaMax further improved this performance by faster convergence. 

\begin{figure} [h!]
    \centering
    \begin{subfigure}{0.46\textwidth}
        \includegraphics[width=\textwidth]{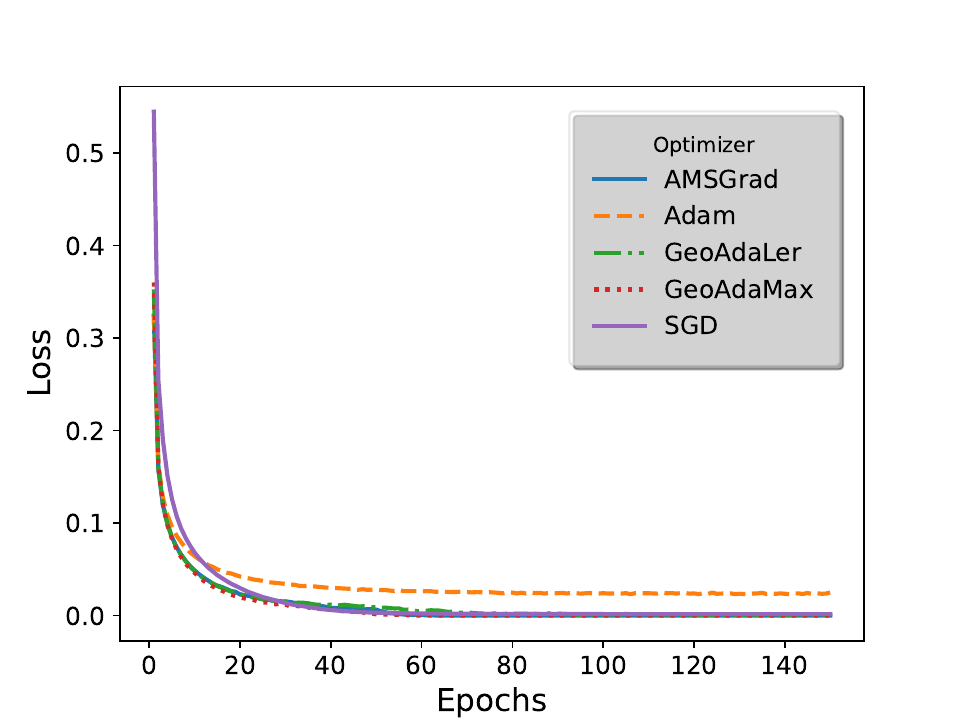}
        \caption{Training loss}
    \end{subfigure}
    \begin{subfigure}{0.46\textwidth}
        \includegraphics[width=\textwidth]{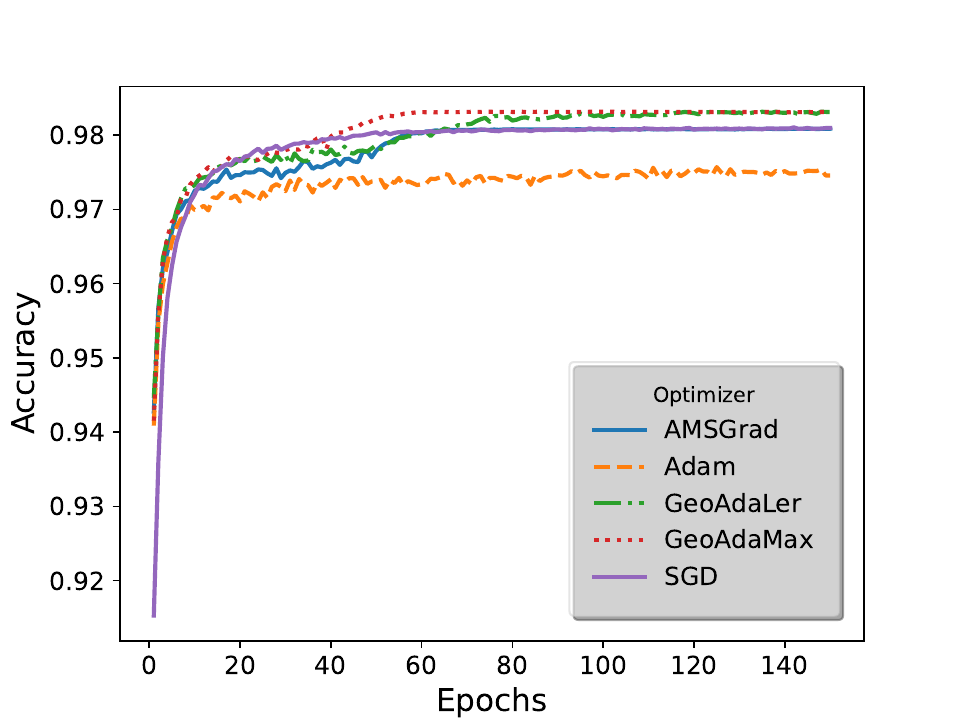}
        \caption{Validation Accuracy}
    \end{subfigure}
    \caption{MNIST}
    \label{fig1}
\end{figure}

\subsection{CIFAR-10 Dataset}

In our CIFAR-10 \citep{Krizhevsky2012} experiments, we run GeoAdaLer, GoeAdaMax and the baseline optimizers for 50 epochs on a network consisting of six convolutional layers with 3x3 kernels and padding, progressively increasing in sizes of $32,32,64,64,128$ and $128$ filters. Each convolution is followed by a batch normalization layer and a ReLU activation. After every two convolutional layers, max-pooling with a $2\times2$ kernel is applied to reduce spatial dimensions, followed by dropout layers with rates $0.2,0.3$ and $0.4$. The output from the convolutional block, consisting of 128 filters with a 4x4 spatial size, is flattened and passed to two fully connected layers: the first reduces the feature size to 128 with a ReLU activation and a 0.5 dropout, and the second outputs 10 logits corresponding to class scores which are passed to a softmax function. The model is trained using cross-entropy loss and re-run 30 times for each optimizer with different initializations of the weights. The averaged results on test data are shown in Figure \ref{fig2} and Table \ref{CIFAR_table}. 
GeoAdaLer shows early run performance comparable to baseline optimizers and occasionally exceeds them in validation accuracy.
Its long run performance was only matched by Adam and GeoAdaMax. The consistent performance of GeoAdaLer in various runs reinforces the value of incorporating a geometric perspective into its design. 

In the CIFAR-10 \citep{Krizhevsky2012} experiment, we trained a convolutional neural network (CNN) model designed specifically for image data with RGB channels. The architecture consists of six convolutional layers, progressively increasing in feature map depth (32, 64, and 128 channels), followed by max-pooling and dropout layers to reduce overfitting and improve generalization. Each convolutional layer is followed by batch normalization to stabilize learning and expedite convergence \citep{ioffe2015batch}. Finally, two fully connected (linear) layers map the feature space to the 10 CIFAR-10 class scores, following common practices for classification in CNN architectures \citep{krizhevsky2012imagenet}.

We train the model using cross-entropy loss for multi-class classification \citep{goodfellow2016deep}, and all adaptive gradient descent algorithms (including GeoAdaLer and GeoAdaMax) were executed for $100$ epochs. To account for random initializations, we average the losses and accuracies over $30$ different initializations. Model results are illustrated in Figure \ref{fig2} and Table \ref{CIFAR_table}. GeoAdaLer demonstrates initial performance on par with other adaptive optimizers, such as Adam \citep{kingma2014adam} and SGD \citep{lecun1998gradient}, but achieved superior long-term accuracy, converging to higher validation accuracy. GeoAdaMax provided further benefits, resulting in faster convergence and smoother training dynamics due to its stability near the optimal point.

\begin{figure}[h!]
    \centering
    \begin{subfigure}{0.46\textwidth}
        \includegraphics[width=\textwidth]{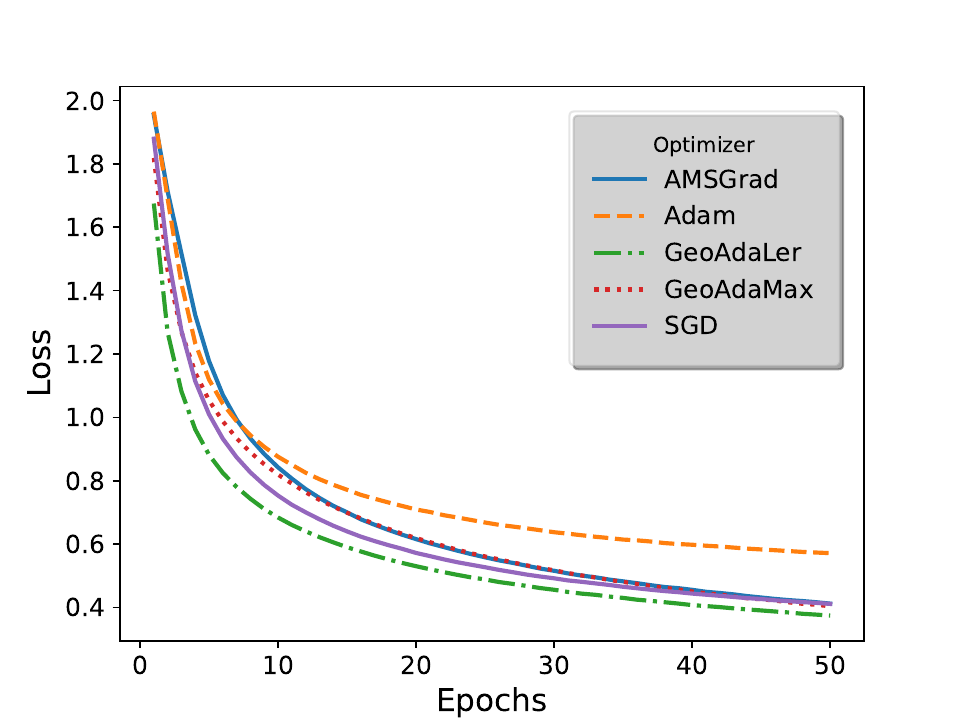}
        \caption{Training loss}
    \end{subfigure}
    \begin{subfigure}{0.46\textwidth}
        \includegraphics[width=\textwidth]{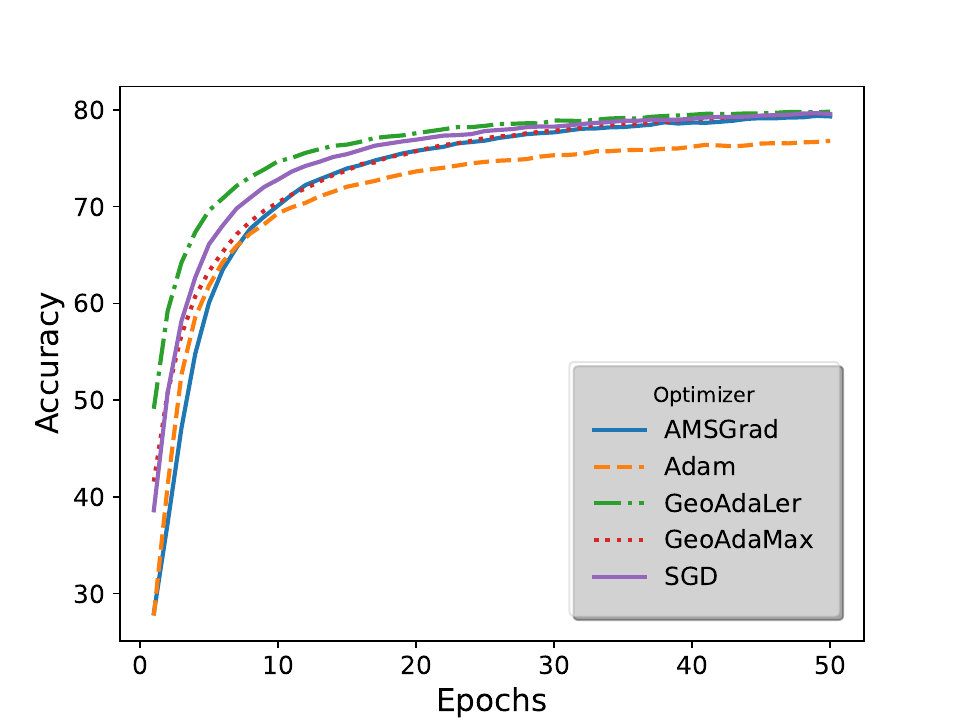}
        \caption{Validation Accuracy}
    \end{subfigure}
    \caption{CIFAR 10 Dataset}
    \label{fig2}
\end{figure}

\begin{multicols}{2}
\begin{table}[H]
\caption{MNIST Final Accuracy}
\label{MNIST_table}
\centering
\begin{tabular}{ll}
\toprule
Optimizer &  Accuracy \\
\midrule
GeoAdaLer & 0.9831\\
GeoAdaMax & 0.9831\\
Adam & 0.9746\\
AMSGrad & 0.9809\\
SGD & 0.9810\\
\bottomrule
\end{tabular}
\end{table}

\begin{table}[H]
\caption{CIFAR Final Accuracy}
\label{CIFAR_table}
\centering
\begin{tabular}{ll}
\toprule
Optimizer &  Accuracy \\
\midrule
GeoAdaLer & 0.7982\\
GeoAdaMax & 0.7962\\
Adam & 0.7679\\
AMSGrad & 0.7932\\
SGD & 0.7957\\
\bottomrule
\end{tabular}
\end{table}

\begin{table}[H]
\caption{Fashion MNIST Final Accuracy}
\label{FashionMNIST_table}
\centering
\begin{tabular}{ll}
\toprule
Optimizer &  Accuracy \\
\midrule
GeoAdaLer & 0.9044\\
GeoAdaMax & 0.9042\\
Adam & 0.8838\\
AMSGrad & 0.8993\\
SGD & 0.8969\\
\bottomrule
\end{tabular}
\end{table}
\end{multicols}

\subsection{Fashion MNIST}
In the Fashion MNIST \cite{xiao2017fashion} experiment we train a fully connected feed forward neural networks. It consists of three fully connected layers: the first layer takes in $784$ input features (flattened $28\times28$ grayscale images) and outputs $512$ features, the second layer reduces these to $256$ features, and the final layer outputs 10 logits corresponding to class scores. Each of the first two layers is followed by a ReLU activation function. The final layer provides raw class scores. Using cross-entropy loss, GeoAdaLer and GeoAdaMax algorithms alongside the baseline optimizers were executed for $100$ epochs and for $30$ different weights initializations. Figure \ref{fig3} and Table \ref{FashionMNIST_table} illustrate the averaged results on the test dataset. GeoAdaLer once again shows comparable results to that demonstrated by the other benchmark algorithms with GeoAdaMax showing further improvement and faster convergence on the Fashion MNIST dataset.

\begin{figure}[h!]
    \centering
    \begin{subfigure}{0.46\textwidth}
        \includegraphics[width=\textwidth]{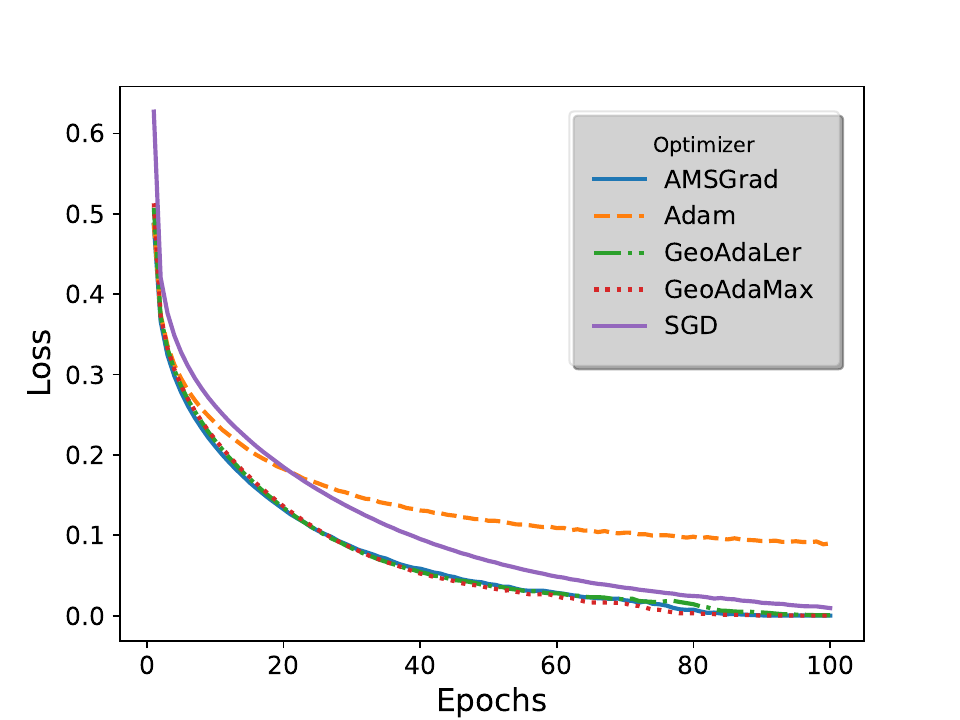}
        \caption{Training loss}
    \end{subfigure}
    \begin{subfigure}{0.46\textwidth}
        \includegraphics[width=\textwidth]{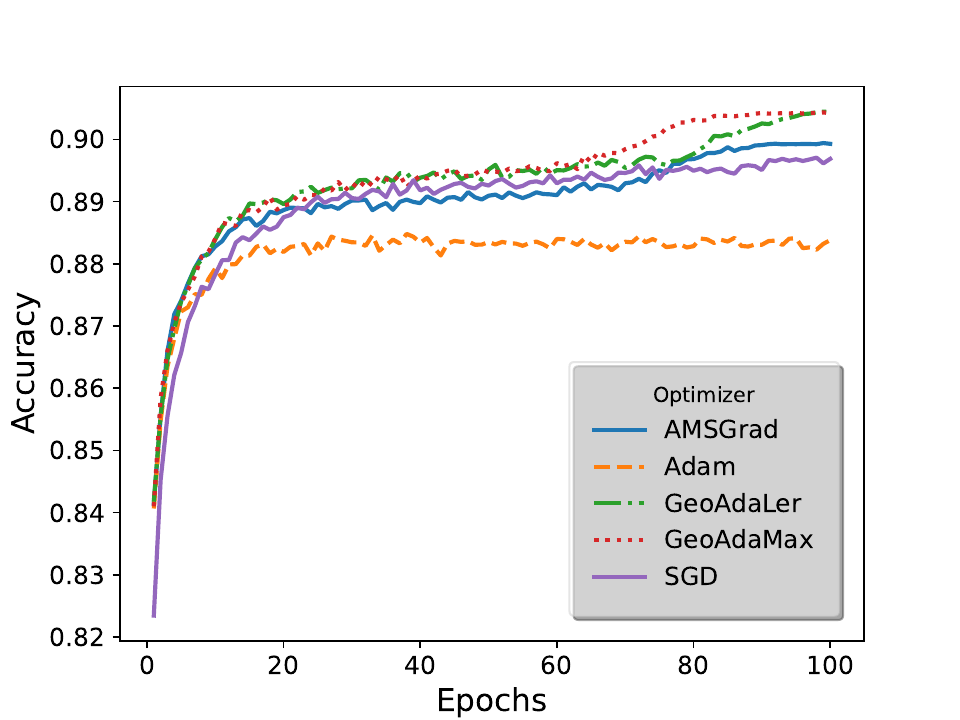}
        \caption{Validation Accuracy}
    \end{subfigure}
    \caption{FashionMNIST Dataset}
    \label{fig3}
\end{figure}

\subsection{Alternative Normal Plane Vectors}

By introducing a hyper-parameter $\epsilon$ in the update rule, we can explore different vectors within the normal plane:
\[
x_{t+1} = x_t - \frac{\gamma}{\sqrt{\|m_t\|^2 + \epsilon}} m_t.
\]
This approach allows us to select vectors with varying angles relative to the horizontal plane. This investigation stems from observing how GeoAdaMax modifies the effective angles by maximizing the denominator of the update.

In this experiment, we compared GeoAdaLer and GeoAdaMax on the MNIST (\ref{fig4}) and CIFAR-10 (\ref{fig5}) datasets using $30$ different weight initializations for selected values of $\epsilon$, all other parameters were as mentioned above in their individual experiments. The mean accuracies for each $\epsilon$ value are plotted versus the value of $\epsilon$. The results indicate that while the normal vector may not always be the most optimal choice, the optimal $\epsilon$ value tends to be close to the default associated with the normal vector. although some improvement do exist through the use of larger values of epsilon it is dependent on the data and not constant through all the experiments.

\begin{figure}[h!]
    \centering
    \begin{subfigure}{0.46\textwidth}
        \includegraphics[width=\textwidth]{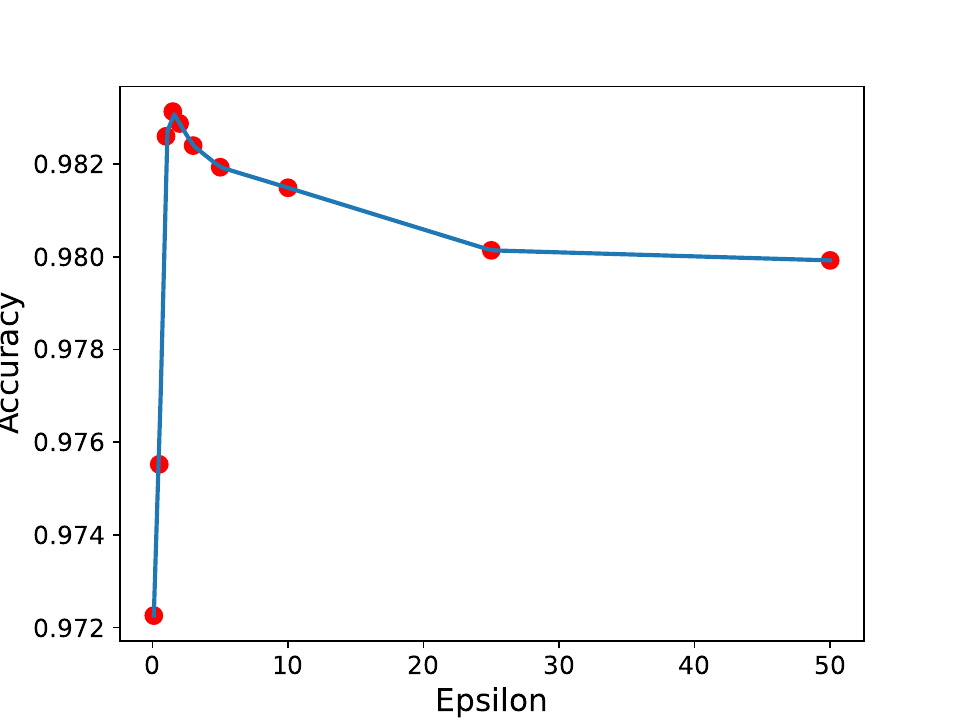}
        \caption{GeoAdaLer Accuracy}
    \end{subfigure}
    \begin{subfigure}{0.46\textwidth}
        \includegraphics[width=\textwidth]{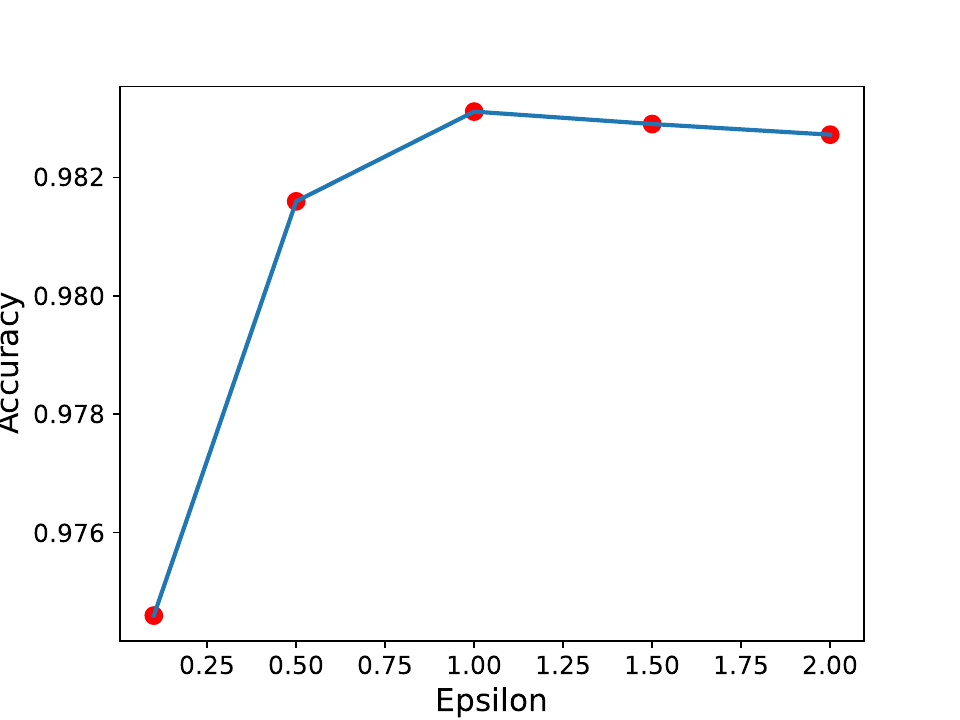}
        \caption{GeoAdaMax Accuracy}
    \end{subfigure}
    \caption{MNIST Dataset}
    \label{fig4}
\end{figure}

\begin{figure}[h!]
    \centering
    \begin{subfigure}{0.46\textwidth}
        \includegraphics[width=\textwidth]{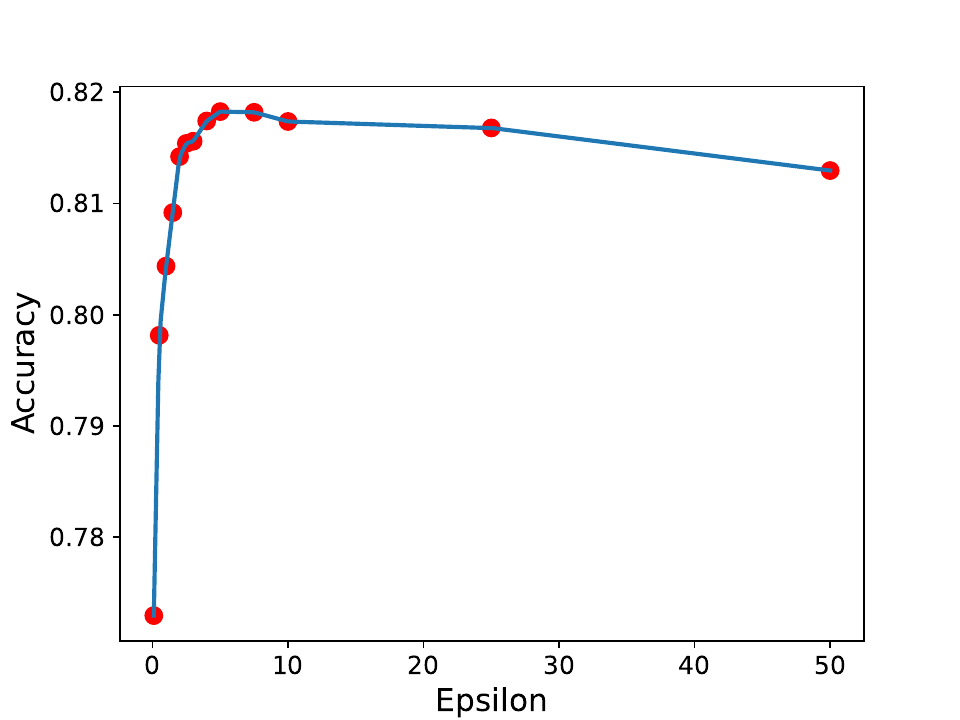}
        \caption{GeoAdaLer Accuracy}
    \end{subfigure}
    \begin{subfigure}{0.46\textwidth}
        \includegraphics[width=\textwidth]{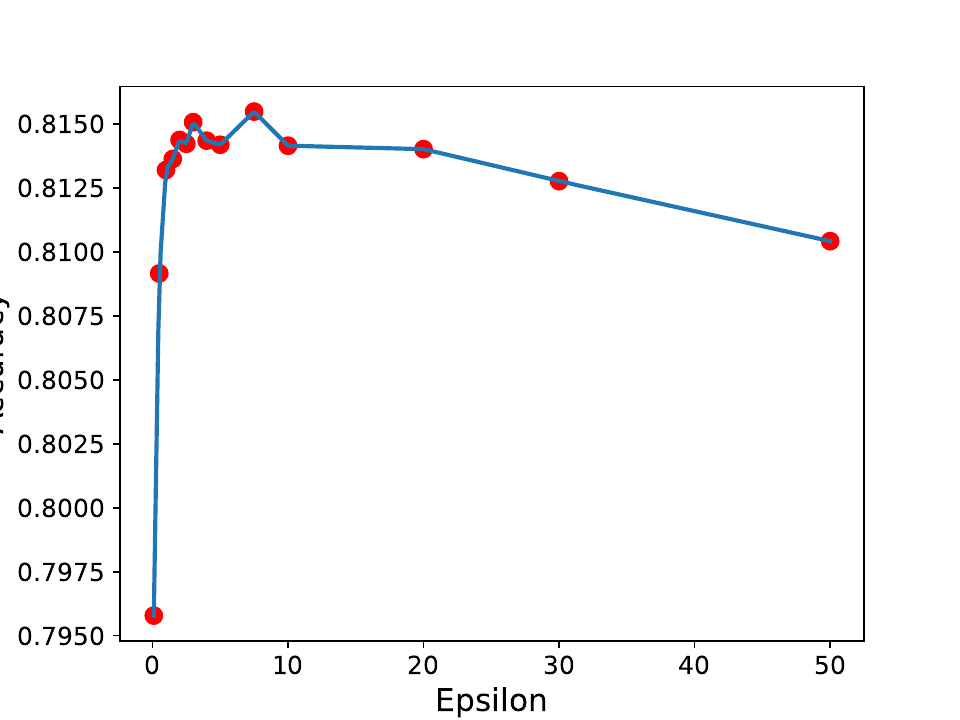}
        \caption{GeoAdaMax Accuracy}
    \end{subfigure}
    \caption{CIFAR 10 Dataset}
    \label{fig5}
\end{figure}

\section{Conclusion}

In this paper, we investigate the adaptive stochastic gradient descent algorithm and propose a geometric approach where the normal vector to the tangent hyperplane plays a crucial role in providing curvature-like information. We call this approach GeoAdaLer, and we show that it is derived from a fundamental understanding of optimization geometry. We present theoretical proof for both deterministic and stochastic settings.
Empirically, we find that GeoAdaLer is competitively comparable with other optimization techniques. Under certain conditions, it offers better performance and stability. GeoAdaLer provides a general geometric framework applicable to most, if not all, large-scale adaptive gradient-based optimization methods. We believe that this presents a significant step towards the development of interpretable machine learning algorithms through the lens of optimization.

\section*{Data and Code Availability}
Our codebase and the associated datasets used in our experiments are available in an open-source repository on GitHub: \hyperlink{https://github.com/Masuzyo/Geoadaler}{https://github.com/Masuzyo/Geoadaler}.

\vskip 0.2in
\bibliographystyle{plain}
\bibliography{ref.bib}

\newpage
\appendix
\section{Deterministic Convergence Proof}\label{proof-deterministic}
\begin{definition}
A differentiable function $f$ on $\mathbb{R}^n$ is said to be co-coercive if 
\begin{align}
\frac{1 }{L}\Vert\nabla f(x)-\nabla f(y) \Vert^2\leq \langle \nabla f(x)-\nabla f (y),x-y\rangle, \text { for all } x,y \in \mathbb R^n.
\end{align}
\end{definition}

\begin{lemma}
Let $\nabla f$ be Lipschitz continuous with constant $L>0$. Then $\frac{\nabla f}{\sqrt{\Vert \nabla f \Vert^2 + 1}}$ is Lipschitz continuous.
\end{lemma}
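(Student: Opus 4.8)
The plan is to factor the map of interest through a fixed ``squashing'' function. Define $\phi:\mathbb{R}^n\to\mathbb{R}^n$ by $\phi(v)=\dfrac{v}{\sqrt{\Vert v\Vert^2+1}}$, so that $\dfrac{\nabla f}{\sqrt{\Vert \nabla f\Vert^2+1}}=\phi\circ\nabla f$. Since a composition of Lipschitz maps is Lipschitz with constant the product of the constants, it suffices to show that $\phi$ itself is globally Lipschitz on $\mathbb{R}^n$; the Lipschitz constant $L_G$ of $\phi\circ\nabla f$ is then at most $(\operatorname{Lip}\phi)\cdot L$.

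First I would note that $\phi$ is smooth, because $v\mapsto\Vert v\Vert^2+1$ is smooth and strictly positive, and compute its Jacobian. Differentiating $\phi_i(v)=v_i(\Vert v\Vert^2+1)^{-1/2}$ componentwise gives
\begin{align}
D\phi(v)=\frac{1}{\sqrt{\Vert v\Vert^2+1}}\left(I-\frac{vv^{\mathsf T}}{\Vert v\Vert^2+1}\right).
\end{align}
Next I would bound its operator norm. The symmetric matrix $I-vv^{\mathsf T}/(\Vert v\Vert^2+1)$ acts as the identity on the orthogonal complement of $v$ (eigenvalue $1$) and sends $v$ to $v/(\Vert v\Vert^2+1)$ (eigenvalue $1/(\Vert v\Vert^2+1)\in(0,1]$), hence it is positive semidefinite with operator norm exactly $1$. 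Therefore $\Vert D\phi(v)\Vert_{\mathrm{op}}\le(\Vert v\Vert^2+1)^{-1/2}\le 1$ for every $v\in\mathbb{R}^n$.

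Then I would conclude via the mean value inequality for vector-valued maps on the convex set $\mathbb{R}^n$: for any $v,w$ we write $\phi(v)-\phi(w)=\int_0^1 D\phi\big(w+t(v-w)\big)(v-w)\,dt$, and taking norms yields $\Vert\phi(v)-\phi(w)\Vert\le\sup_{t\in[0,1]}\Vert D\phi(w+t(v-w))\Vert_{\mathrm{op}}\,\Vert v-w\Vert\le\Vert v-w\Vert$, so $\phi$ is $1$-Lipschitz. Composing with $\nabla f$ gives $\Vert\phi(\nabla f(x))-\phi(\nabla f(y))\Vert\le\Vert\nabla f(x)-\nabla f(y)\Vert\le L\Vert x-y\Vert$, which is the claim (and shows $L_G\le L$). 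The only genuinely delicate point is the eigenvalue/operator-norm estimate for the rank-one perturbation of the identity; everything else is routine. If one prefers to avoid the integral form of the mean value inequality, one can instead observe that $\phi=\nabla h$ for $h(v)=\sqrt{\Vert v\Vert^2+1}$, whose Hessian is exactly the positive semidefinite matrix above with spectral radius $\le 1$, so $h$ is $1$-smooth and its gradient $\phi$ is $1$-Lipschitz.
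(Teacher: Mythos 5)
Your proof is correct, and it is considerably more substantive than the paper's own justification, which consists of the single sentence that the map ``is a continuous function of $\nabla f$.'' As literally written, that is not a proof: continuity of the outer map does not imply Lipschitz continuity of the composition, so the paper is really just asserting the lemma. You take the same high-level route --- factoring the map as $\phi\circ\nabla f$ with $\phi(v)=v/\sqrt{\Vert v\Vert^{2}+1}$ and invoking closure of Lipschitz maps under composition --- but you actually establish the missing ingredient, namely that $\phi$ is globally $1$-Lipschitz, via the Jacobian
\begin{align*}
D\phi(v)=\frac{1}{\sqrt{\Vert v\Vert^{2}+1}}\left(I-\frac{vv^{\mathsf T}}{\Vert v\Vert^{2}+1}\right),
\end{align*}
whose eigenvalues you correctly identify as $1$ on $v^{\perp}$ and $1/(\Vert v\Vert^{2}+1)$ along $v$, giving operator norm at most $1$; the mean value inequality then finishes the argument. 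Your observation that $\phi=\nabla h$ for $h(v)=\sqrt{\Vert v\Vert^{2}+1}$ is a clean alternative to the integral form. What your version buys beyond the paper's is the explicit bound $L_{G}\le L$, which is genuinely useful downstream: the contraction parameter in Theorem~\ref{geo-deterministic} is expressed in terms of both $L$ and $L_{G}$, and the paper never otherwise relates the two constants.
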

\begin{proof}
    Follows since $\frac{\nabla f}{\sqrt{\Vert \nabla f \Vert^2 + 1}}$ is a continuous function of $\nabla f$.
\end{proof}

\begin{lemma}\label{conseq-quadratic-bd}
    Suppose $\nabla f$ is Lipschitz continuous with parameter $L$,  domain of $f$ is $\mathbb R^n$ and $f$ has a minimum at $x^*$. Then
    $$\frac{1}{2L}\Vert\nabla f(z)\Vert^2\leq f(z)-f(x^*)$$
\end{lemma}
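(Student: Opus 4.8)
The plan is to obtain the inequality directly from the \emph{quadratic upper bound} (descent lemma), which holds whenever $\nabla f$ is $L$-Lipschitz on $\mathbb{R}^n$:
\begin{align}
f(y) \le f(z) + \langle \nabla f(z), y - z \rangle + \frac{L}{2}\Vert y - z\Vert^2 \qquad \text{for all } y, z \in \mathbb{R}^n .
\end{align}
This is exactly the quadratic upper bound invoked earlier in the paper; it follows from the fundamental theorem of calculus applied to $t \mapsto f(z + t(y-z))$ together with Cauchy--Schwarz and the Lipschitz estimate on $\nabla f$, so I would either cite it or include that one-line derivation.

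First I would fix $z$ and treat the right-hand side as a function of $y$. Since it is a strictly convex quadratic in $y$ (the second-order term is $\tfrac{L}{2}\Vert y - z\Vert^2$ with $L > 0$), it attains its minimum at $y^\star = z - \tfrac{1}{L}\nabla f(z)$. Substituting $y = y^\star$ and simplifying the arithmetic gives
\begin{align}
f\!\left(z - \tfrac{1}{L}\nabla f(z)\right) \le f(z) - \frac{1}{2L}\Vert \nabla f(z)\Vert^2 .
\end{align}

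Next I would invoke the hypothesis that $f$ attains its global minimum at $x^*$, so $f(x^*) \le f(w)$ for every $w \in \mathbb{R}^n$, in particular for $w = z - \tfrac{1}{L}\nabla f(z)$. Chaining this with the previous display yields $f(x^*) \le f(z) - \tfrac{1}{2L}\Vert \nabla f(z)\Vert^2$, and rearranging gives the claimed bound $\tfrac{1}{2L}\Vert \nabla f(z)\Vert^2 \le f(z) - f(x^*)$.

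I do not expect a serious obstacle: the only nontrivial ingredient is the quadratic upper bound, which is standard for $L$-smooth functions and is already assumed available in the paper. The point worth flagging is that the argument uses neither convexity of $f$ nor differentiability beyond the Lipschitz-gradient assumption — only the descent lemma and the existence of a global minimizer — so no hypotheses beyond those stated in the lemma are required.
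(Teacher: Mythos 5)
Your proposal is correct and follows essentially the same route as the paper: both start from the quadratic upper bound $f(y) \le f(z) + \langle \nabla f(z), y-z\rangle + \tfrac{L}{2}\Vert y-z\Vert^2$ and minimize the right-hand side over $y$, the only cosmetic difference being that you exhibit the minimizer $y^\star = z - \tfrac{1}{L}\nabla f(z)$ explicitly while the paper parametrizes $y = z + tv$ and takes a nested infimum. Your observation that convexity of $f$ is not needed is also accurate.
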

\begin{proof}
The proof relies on the following quadratic upper bound property
\begin{align}\label{quadratic-upper-bd}
f(y) &\leq f(z)+\langle \nabla f(z),y-z\rangle+\frac{L}{2}\Vert y-z\Vert^2 \quad \text{for all } y, z \in \text{ dom}(f)
\end{align}
where dom$(f)$. the domain of $f$ is a convex set. i.e., taking infimum on both sides of \ref{quadratic-upper-bd} gives
\begin{align}
        f(x^*)=\inf_y f(y) &\leq \inf_y(f(z)+\langle \nabla f(z),y-z\rangle+\frac{L}{2}\Vert y-z\Vert^2\nonumber\\ 
        &=\inf_{\Vert v \Vert=1}\inf_t \left (f(z)+t \langle \nabla f(z), v\rangle+\frac{Lt^2}{2} \right)\nonumber\\
        &=\inf_{\Vert v \Vert=1}\left (f(z)-\frac{1}{2L} \langle \nabla f(z), v\rangle \right)\nonumber\\
        &= f(z)-\frac{1}{2L}  \Vert\nabla f(z)\Vert^2 \label{conseq-quadratic-bd-pf}
\end{align}
Rearranging the terms in \ref{conseq-quadratic-bd-pf} gives the desired result.
\end{proof}

\begin{lemma}[Co-coercivity]\label{cocoer}
Assume $f$ is convex, proper and lower semicontinuous.
Let $\nabla f$ be Lipschitz and let $L_G$ be the Lipschitz constant for 
$\displaystyle \frac{\nabla f}{\sqrt{\Vert \nabla f \Vert^2 + 1}}$. 
Then the following co-coercivity property holds:
\begin{align}
\frac{1}{L_G}\left\Vert \frac{\nabla f(x)}{\sqrt{\Vert\nabla f(x)\Vert^2 +1}} - \frac{\nabla f(y)}{\sqrt{\Vert\nabla f(y) \Vert^2 +1}} \right\Vert^2 \leq \bigg \langle \frac{\nabla f(x)}{\sqrt{\Vert\nabla f(x)\Vert^2 +1}}- \frac{\nabla f(y)}{\sqrt{\Vert\nabla f(y)\Vert^2 +1}},x-y \bigg\rangle\
\end{align}
\end{lemma}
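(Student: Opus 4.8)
The plan is to treat the map $G := \nabla f / \sqrt{\Vert \nabla f\Vert^2 + 1}$ as itself being the gradient of some auxiliary convex function, and then invoke the classical co-coercivity (Baillon–Haddad) theorem for gradients of convex functions with Lipschitz gradient. The key observation I would pursue is that $G = \nabla h$ for the function $h(x) := \phi(\nabla f(x))$ is generally \emph{not} true in closed form, so instead I would look for a scalar potential directly: define $h:\mathbb{R}^n \to \mathbb{R}$ via $h := \psi \circ f$ where $\psi$ is a real function chosen so that $\psi'(s)$ compensates appropriately. Since $\nabla (\psi\circ f)(x) = \psi'(f(x))\nabla f(x)$, and $G(x)$ rescales $\nabla f(x)$ by the factor $1/\sqrt{\Vert\nabla f(x)\Vert^2+1}$ which depends on $x$ only through $\Vert\nabla f(x)\Vert$, this works cleanly only when $\Vert\nabla f\Vert$ is itself a function of $f$. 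So the honest route is different.

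The cleaner approach: show directly that $G$ is the gradient of a convex function. Note $G(x) = \Phi(\nabla f(x))$ where $\Phi:\mathbb{R}^n\to\mathbb{R}^n$, $\Phi(v) = v/\sqrt{\Vert v\Vert^2+1}$. One checks that $\Phi = \nabla \Psi$ for the scalar potential $\Psi(v) = \sqrt{\Vert v\Vert^2 + 1}$, since $\nabla \Psi(v) = v/\sqrt{\Vert v\Vert^2+1}$. The map $\Psi$ is convex (it is a composition of the convex increasing function $\sqrt{1+t}$ with the convex function $\Vert v\Vert^2$, or directly: its Hessian is positive semidefinite). Hence $G(x) = \nabla\Psi(\nabla f(x))$. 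Now if $f$ is convex with $\nabla f$ available, I would argue that $x \mapsto \Psi(\nabla f(x))$ is the gradient of the convex function $x\mapsto \Psi^{**}$-type conjugate construction — but more simply: $G$ is monotone as the composition of the monotone map $\nabla f$ with the monotone gradient map $\nabla \Psi$? That composition is not automatically monotone. So the first real step is to establish that $G$ is a monotone operator, and ideally that it is itself a gradient.

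The actual argument I expect to carry out: $G$ is Lipschitz with constant $L_G$ (by the preceding lemma), so it suffices to prove $G$ is the gradient of a convex, differentiable function $h$ on $\mathbb{R}^n$; then the Baillon–Haddad theorem gives exactly the claimed co-coercivity with parameter $L_G$. To produce $h$, I would use the fact that $f$ is convex so $\nabla f$ is a monotone gradient field, and define $h(x) := \int_0^1 \langle G(x_0 + t(x-x_0)), x - x_0\rangle\, dt$ for a fixed basepoint $x_0$; path-independence of this line integral (hence well-definedness and $\nabla h = G$) follows if the Jacobian $DG(x) = D\Psi(\nabla f(x))\, \nabla^2 f(x)$ is symmetric, which holds because $D\Psi(\nabla f(x))$ and $\nabla^2 f(x)$ are both symmetric positive semidefinite and, crucially, $D\Psi(v) = \frac{1}{\sqrt{\Vert v\Vert^2+1}}\left(I - \frac{vv^T}{\Vert v\Vert^2+1}\right)$ is a polynomial in the single symmetric matrix $\nabla^2 f(x)$ only along directions... — \textbf{this symmetry of the product of two symmetric PSD matrices is precisely the main obstacle}, since in general $AB$ is not symmetric. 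The resolution I would aim for is to instead prove co-coercivity \emph{without} a potential, by combining: (i) the co-coercivity of $\Phi = \nabla\Psi$ on $\mathbb{R}^n$ with its own Lipschitz constant (which is $1$, since $\Vert D\Psi\Vert \le 1$), giving $\langle \Phi(u)-\Phi(v), u-v\rangle \ge \Vert\Phi(u)-\Phi(v)\Vert^2$; (ii) applied at $u = \nabla f(x)$, $v = \nabla f(y)$, this yields $\langle G(x)-G(y), \nabla f(x)-\nabla f(y)\rangle \ge \Vert G(x)-G(y)\Vert^2$; and (iii) bridging $\nabla f(x)-\nabla f(y)$ to $x-y$ using convexity/monotonicity of $\nabla f$ together with the Lipschitz bound on $G$. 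The delicate point is step (iii): one must bound $\langle G(x)-G(y), \nabla f(x)-\nabla f(y)\rangle$ from above by $L_G\langle G(x)-G(y), x-y\rangle$, which I would attempt via the monotone-operator inequality $\langle \nabla f(x)-\nabla f(y), x-y\rangle \ge \frac1L\Vert \nabla f(x)-\nabla f(y)\Vert^2$ (co-coercivity of $\nabla f$ itself, Lemma on co-coercivity applied to $f$) combined with Cauchy–Schwarz, and this linkage between the two co-coercivity statements is where the constant $L_G$ must be tracked carefully; I expect that to be the crux of the write-up.
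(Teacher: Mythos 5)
Your proposal is not a proof: it ends by explicitly deferring the decisive step, and that step does not go through by the means you describe. Writing $G = \nabla f/\sqrt{\Vert\nabla f\Vert^2+1}$, your steps (i)--(ii) are fine and give $\langle G(x)-G(y),\,\nabla f(x)-\nabla f(y)\rangle \ge \Vert G(x)-G(y)\Vert^2$, but step (iii) requires $\langle G(x)-G(y),\,\nabla f(x)-\nabla f(y)\rangle \le L_G\,\langle G(x)-G(y),\,x-y\rangle$, and neither Cauchy--Schwarz nor co-coercivity of $\nabla f$ delivers this. Cauchy--Schwarz bounds the left side above by $\Vert G(x)-G(y)\Vert\,\Vert\nabla f(x)-\nabla f(y)\Vert$, and to convert that into a multiple of $\langle G(x)-G(y),x-y\rangle$ you would need a positive lower bound on that very inner product --- which is essentially the conclusion you are trying to prove. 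You have not even established that $G$ is monotone (i.e.\ that the right-hand side of the lemma is nonnegative): monotonicity is not preserved under composition, so the monotonicity of $\Phi=\nabla\Psi$ and of $\nabla f$ separately does not give it for $\Phi\circ\nabla f$. The chain (i)--(iii) therefore collapses exactly at the point you yourself flag as ``the crux.''

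For comparison, the paper takes the potential-function route that you considered and then abandoned: it defines $F(z)=\int_a^z \nabla f(u)/\sqrt{\Vert\nabla f(u)\Vert^2+1}\,du$, asserts that the integrand is a conservative vector field so that $F$ is well defined, asserts that $F$ is convex because the integrand is a maximal monotone operator, and then runs the standard Baillon--Haddad argument: the auxiliary functions $f_x(z)=F(z)-\langle G(x),z\rangle$ and $f_y$, the quadratic-bound estimate of Lemma \ref{conseq-quadratic-bd} applied to each, and addition of the two resulting inequalities. Your observation that $DG(x)=\nabla^2\Psi(\nabla f(x))\,\nabla^2 f(x)$ is a product of two symmetric positive semidefinite matrices, hence not symmetric in general, is precisely the point at which that construction needs justification: path-independence (well-definedness of $F$) and convexity of $F$ carry the entire proof, and the paper asserts rather than proves them. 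So you have correctly located the real difficulty --- and in doing so identified a gap in the paper's own argument --- but your proposed detour around it does not close the proof. To produce a complete argument you would need either to justify that $G$ is a conservative monotone field (possibly under additional hypotheses on $f$), after which the paper's argument applies verbatim, or to find a genuinely different route; as written, your proposal establishes neither.
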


\begin{proof}
For any $x$ and $y$, let
\begin{align}
    f_x(z) \coloneqq F(z) - \frac{\nabla f(x)^Tz}{\sqrt{\Vert\nabla f(x)\Vert^2 +1}}, \\
    f_y(z) \coloneqq F(z) - \frac{\nabla f(y)^Tz}{\sqrt{\Vert\nabla f(y)\Vert^2 +1}} 
\end{align}
for some constant $a$ where $\displaystyle F(z) \coloneqq \int_a^z\frac{\nabla f(u)}{\sqrt{\Vert\nabla f(u)\Vert^2 +1}}du$ is a scalar potential function for the vector field integrand. Notice that $F$ is well defined since the integrand is a conservative vector field. Also, F(z) is convex since the integrand is a maximal monotone operator resulting from a convex, proper and lower semi-continuous function, $f$ \citep{minty1962monotone,brezis1973operateurs,rockafellar1970convex}.

Thus, $f_x$ and $f_y$ are well defined and convex since the difference of a convex function and a linear function is convex. $f_x$ is minimized at $z=x$, thus evaluating $f_x$ at $y$ and $x$ and subtracting the results gives
\begin{align}
F(y) &-F(x)-\bigg\langle \frac{\nabla f(x)}{\sqrt{\Vert\nabla f(x)\Vert^2 +1}},y-x \bigg\rangle \nonumber\\
&= f_x(y) - f_x(x) \nonumber\\
&\ge \frac{1}{2L_G} \Vert \nabla f_x(y)\Vert^2 \label{conseq-quadratic-bd-used}\\
&= \frac{1}{2L_G}\bigg\Vert\frac{\nabla f(y)}{\sqrt{\Vert\nabla f(y)\Vert^2 +1}} - \frac{\nabla f(x)}{\sqrt{\Vert\nabla f(x) \Vert^2 +1}} \bigg\Vert^2 \label{grad-fx}
\end{align}
Equation \ref{conseq-quadratic-bd-used} follows from Lemma \ref{conseq-quadratic-bd} and equation \ref{grad-fx} from taking the gradient of $f_x$ at $x$.

\noindent
Similarly, $z=y$ minimizes $f_y$  and 
\begin{align}
F(x) &-F(y)-\bigg\langle \frac{\nabla f(y)}{\sqrt{\Vert\nabla f(y)\Vert^2 +1}},x-y \bigg\rangle \nonumber\\
&= f_y(x) - f_y(y) \nonumber\\
&\ge \frac{1}{2L_G} \Vert \nabla f_y(x)\Vert^2 \nonumber\\
&= \frac{1}{2L_G}\bigg\Vert\frac{\nabla f(y)}{\sqrt{\Vert\nabla f(y)\Vert^2 +1}} - \frac{\nabla f(x)}{\sqrt{\Vert\nabla f(x) \Vert^2 +1}} \bigg\Vert^2 \label{grad-fy}
\end{align}
Adding the inequalities \ref{grad-fx} and \ref{grad-fy}, we obtain
\begin{align*}
\frac{1}{L_G}\left\Vert \frac{\nabla f(x)}{\sqrt{\Vert\nabla f(x)\Vert^2 +1}} - \frac{\nabla f(y)}{\sqrt{\Vert\nabla f(y) \Vert^2 +1}} \right\Vert^2 
& \le \bigg \langle \frac{\nabla f(x)}{\sqrt{\Vert\nabla f(x)\Vert^2 +1}}- \frac{\nabla f(y)}{\sqrt{\Vert\nabla f(y)\Vert^2 +1}},x-y \bigg\rangle.
\end{align*}
\end{proof}

\begin{theorem}
Let $f:\mathbb{R}^n \to \mathbb{R}$ be continuous and $\nabla f$ Lipschitz continuous with $\gamma \le \frac{1}{L} $ where $L$ is the Lipschitz constant for $\nabla f$. Assume $f$ attains an optimal value at $x^* = \arg\min_x f(x)$. Then $T$ defined in \eqref{operator} is a contraction map with contraction parameter $\alpha = \sqrt{1 + \gamma^2 L_G^2 - 2\gamma \frac{L_G^2}{L}}$ where $L_G$ is the Lipschitz constant for 
$\displaystyle \frac{\nabla f}{\sqrt{\Vert \nabla f \Vert^2 + 1}} $ and $\Vert \cdot \Vert$ is the Euclidean norm in $\mathbb{R}^n$.
\end{theorem}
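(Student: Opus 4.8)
The plan is to establish the contraction inequality directly, by expanding $\Vert Tx-Ty\Vert^2$ in the Euclidean norm and then controlling the cross term via the co-coercivity statement of Lemma \ref{cocoer}. Write $G(x) := \dfrac{\nabla f(x)}{\sqrt{\Vert \nabla f(x)\Vert^2+1}}$, so that $Tx = x-\gamma\,G(x)$; recall from the preceding lemmas that $G$ is Lipschitz with constant $L_G$ and that the normalization map $v \mapsto v/\sqrt{\Vert v\Vert^2+1}$ is $1$-Lipschitz, which in particular forces $L_G \le L$ and $\Vert G(x)-G(y)\Vert \le \Vert\nabla f(x)-\nabla f(y)\Vert$.

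First I would expand, for arbitrary $x,y\in\mathbb{R}^n$,
\[
\Vert Tx-Ty\Vert^2 = \Vert x-y\Vert^2 - 2\gamma\,\langle x-y,\,G(x)-G(y)\rangle + \gamma^2\,\Vert G(x)-G(y)\Vert^2 ,
\]
and bound the last summand by $\gamma^2 L_G^2\Vert x-y\Vert^2$ using the Lipschitz property of $G$. For the cross term I would combine Lemma \ref{cocoer}, which gives $\langle x-y,G(x)-G(y)\rangle \ge \tfrac{1}{L_G}\Vert G(x)-G(y)\Vert^2$, with the Baillon--Haddad co-coercivity of $\nabla f$ (constant $L$) and the comparison $\Vert G(x)-G(y)\Vert \le \Vert\nabla f(x)-\nabla f(y)\Vert$, so as to extract a lower bound of the form $\langle x-y,G(x)-G(y)\rangle \ge \tfrac{L_G^2}{L}\Vert x-y\Vert^2$. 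Substituting both estimates into the expansion yields
\[
\Vert Tx-Ty\Vert^2 \le \Big(1 + \gamma^2 L_G^2 - 2\gamma\tfrac{L_G^2}{L}\Big)\Vert x-y\Vert^2 = \alpha^2\Vert x-y\Vert^2 .
\]

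It then remains to verify $0 \le \alpha < 1$. Since $\gamma \le 1/L$ we have $\gamma^2 L_G^2 - 2\gamma L_G^2/L = \gamma L_G^2(\gamma-2/L) < 0$, so $\alpha^2 < 1$; and because $\gamma\mapsto \gamma^2-2\gamma/L$ attains its minimum on $(0,1/L]$ at $\gamma=1/L$ with value $-1/L^2$, we get $\alpha^2 \ge 1-L_G^2/L^2 \ge 0$ from $L_G\le L$. Hence $T$ is a genuine contraction on the complete space $\mathbb{R}^n$, and the Banach fixed point theorem gives a unique fixed point; since $\nabla f(x^*)=0$ implies $Tx^*=x^*$, that fixed point is exactly the minimizer $x^*$, and the iterates converge to it at geometric rate $\alpha$.

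The step I expect to be the main obstacle is the lower bound on the cross term: co-coercivity of $G$ by itself only controls $\langle x-y,G(x)-G(y)\rangle$ from below by a multiple of $\Vert G(x)-G(y)\Vert^2$, so converting this into a multiple of $\Vert x-y\Vert^2$ requires a quantitative \emph{lower} comparison of $\Vert G(x)-G(y)\Vert$ with $\Vert x-y\Vert$. Making that precise is where the interplay between $L$, $L_G$ and the geometry of the normalization map $v\mapsto v/\sqrt{\Vert v\Vert^2+1}$ must be used carefully, and where any implicit strong-convexity-type hypothesis would have to be invoked; the remaining manipulations are routine algebra.
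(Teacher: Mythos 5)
Your expansion of $\Vert Tx-Ty\Vert^2$ and the bound $\gamma^2\Vert G(x)-G(y)\Vert^2\le\gamma^2L_G^2\Vert x-y\Vert^2$ match the paper's opening moves, and your verification that $0\le\alpha<1$ for $0<\gamma\le 1/L$ (using $L_G\le L$) is fine. The genuine gap is exactly the step you flag yourself: the lower bound $\langle x-y,\,G(x)-G(y)\rangle\ge\tfrac{L_G^2}{L}\Vert x-y\Vert^2$ does not follow from the stated hypotheses, and is in fact false for merely convex $f$. Every ingredient you propose to combine --- Lemma \ref{cocoer}, Baillon--Haddad for $\nabla f$, and the comparison $\Vert G(x)-G(y)\Vert\le\Vert\nabla f(x)-\nabla f(y)\Vert$ --- bounds the inner product from below by a multiple of $\Vert G(x)-G(y)\Vert^2$ or of $\Vert\nabla f(x)-\nabla f(y)\Vert^2$; converting either into a multiple of $\Vert x-y\Vert^2$ requires a \emph{reverse} Lipschitz estimate $\Vert G(x)-G(y)\Vert\gtrsim\Vert x-y\Vert$, i.e.\ strong monotonicity of $\nabla f$, which is not assumed. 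Concretely, take a convex $f$ with $L$-Lipschitz gradient whose gradient is constant on some open set $U$ but not globally constant, so that $L_G>0$; for distinct $x,y\in U$ the left-hand side is $0$ while your claimed right-hand side is strictly positive. So this step would fail outright, not merely be hard to justify.

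For comparison, the paper does not attempt the conversion on the cross term alone: it applies co-coercivity to obtain $-\tfrac{2\gamma}{L_G}\Vert G(x)-G(y)\Vert^2$, merges this with the $+\gamma^2\Vert G(x)-G(y)\Vert^2$ term into the single quantity $\bigl(\gamma^2-\tfrac{2\gamma}{L_G}\bigr)\Vert G(x)-G(y)\Vert^2$, and only then substitutes $\Vert G(x)-G(y)\Vert^2\le L_G^2\Vert x-y\Vert^2$ in inequality \eqref{Lipschitzcc}. Note, however, that this substitution is performed with a \emph{negative} coefficient (since $\gamma\le 1/L\le 1/L_G<2/L_G$), which reverses the inequality; so the paper's route meets the same obstruction you identified rather than circumventing it. The honest repair is an added strong-convexity (or strong-monotonicity / quadratic-growth) hypothesis, under which your cross-term strategy does go through with $\tfrac{L_G^2}{L}$ replaced by the appropriate strong-monotonicity constant of $G$; without such a hypothesis one only obtains $\Vert Tx-Ty\Vert\le\Vert x-y\Vert$, i.e.\ nonexpansiveness, not a contraction with $\alpha<1$.
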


\begin{proof}
It suffices to show that for some $\alpha \in [0,1)$,
\begin{equation}
    \Vert Tx - Ty \Vert \le \alpha \Vert x - y \Vert.
\end{equation}

\noindent
To this end, we compute as follows
\begin{align}
\Vert Tx - Ty \Vert^2  &= \bigg\Vert x - \frac{\gamma\nabla f(x)}{\sqrt{\Vert\nabla f(x)\Vert^2 +1}}-y + \frac{\gamma\nabla f(y)}{\sqrt{\Vert\nabla f(y)\Vert^2 +1}}\bigg\Vert^2 \nonumber\\
&=\bigg\Vert (x -y)- \gamma\left ( \frac{\nabla f(x)}{\sqrt{\Vert\nabla f(x) \Vert^2 +1}} - \frac{\nabla f(y)}{\sqrt{\Vert\nabla f(y)\Vert^2 +1}} \bigg )\right\Vert^2 \nonumber\\
&= \Vert x -y\Vert^2 - 2\gamma \left \langle x -y,\frac{\nabla f(x)}{\sqrt{\Vert\nabla f(x)\Vert^2 +1}} - \frac{\nabla f(y)}{\sqrt{\Vert\nabla f(y)\Vert^2 +1} }\right \rangle \nonumber\\
&+\gamma^2\bigg\Vert \frac{\nabla f(x)}{\sqrt{\Vert\nabla f(x)\Vert^2 +1}} - \frac{\nabla f(y)}{\sqrt{\Vert\nabla f(y) \Vert^2 +1}} \bigg\Vert^2 \nonumber\\
 & \leq \bigg\Vert x -y\bigg\Vert^2+\bigg ( \gamma^2- \frac{2\gamma}{L_G} \bigg )\Bigg \Vert \frac{\nabla f(x)}{\sqrt{\Vert\nabla f(x)\Vert^2 +1}} - \frac{\nabla f(y)}{\sqrt{\Vert\nabla f(y)\Vert^2 +1}} \Bigg\Vert^2 \label{co-coercivity-result}\\
&\leq \Vert x - y \Vert^2 + L_G^2 \bigg ( \gamma^2- \frac{2\gamma}{L_G} \bigg ) \Vert x -y \Vert^2 \label{Lipschitzcc}\\
& \le  \bigg (1 + \gamma^2 L_G^2- \frac{2 \, \gamma L_G^2}{L} \bigg ) \Vert x -y \Vert^2 
\end{align}
Inequality \eqref{co-coercivity-result} follows from co-coercivity (see Lemma \ref{cocoer}) while \ref{Lipschitzcc} follows from the continuity of $\displaystyle \frac{\nabla f}{\sqrt{\Vert \nabla f \Vert^2 + 1}} $ and the Lipschitz continuity of $\nabla f$.
\end{proof}

\section{Stochastic Convergence Proof}\label{proof-stochastic}
\subsection{Important Lemmas and Definitions}
In this section, we provide proof of convergence of our algorithm. To this end, we start with some definitions and lemmas necessary for the main theorem.
\begin{definition}
A differentiable function $f:\mathbb{R}^n \to \mathbb{R}$ is convex if for all $x,y \in \mathbb{R}$, 
\begin{align}
    f(y) \ge f(x) + \nabla f(x)^T (y-x).
\end{align}
\end{definition}

\noindent
For the rest of the paper, we make the following assumptions on the stochastic objective function $f_t:\mathbb{R}^n \to \mathbb{R}$ where the iteration counter $t\ge 1$.
\begin{assumptions}\label{stochastic-assume}
\begin{enumerate}
\item $f_t$ is convex for all $t$.
\item For all $t, f_t$ is differentiable.
\item For all $t$, there exists $G\ge 0$ such that $\Vert\nabla f_t(x)\Vert \le D$ for all $x \in X \subseteq \mathbb{R}^n$ where $X$ is the feasible set.
\item $A\coloneqq \{x_1,x_2, \cdots\}$, the iterates generated by GeoAdaLer algorithm.
\item $ x^* \coloneqq \arg\min_{x\in X} \sum_{t=1}^T f_t(x)$ exists and $\Vert x_k - x^*\Vert \le D$ for all $x_k \in A$.
\item $\beta_t \coloneqq \beta \lambda^{t-1}$ where $\lambda \in (0,1), \beta \in [0,1)$.
\end{enumerate}
\end{assumptions}
\noindent
Also, for notational convenience, we take $g_t = \nabla f_t(x_t)$.

\begin{lemma}\label{mt-bounded}
Under Assumptions \ref{stochastic-assume}, no. 3, the exponential moving average
\begin{align}
m_t = \beta_t m_{t-1} +  (1-\beta_t) g_t
\end{align}
is bounded for all  $t$.
\end{lemma}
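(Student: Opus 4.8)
The plan is to prove boundedness of $m_t$ by induction on $t$, exploiting the fact that the recursion $m_t = \beta_t m_{t-1} + (1-\beta_t) g_t$ expresses $m_t$ as a convex combination of the previous average $m_{t-1}$ and the current gradient $g_t$, together with the uniform gradient bound from Assumptions \ref{stochastic-assume}, no.\ 3. Note that the bound in that assumption is written as $\Vert \nabla f_t(x)\Vert \le D$ (the statement uses $D$, though it introduces $G$); I will carry the bound through with whatever constant the assumption supplies, call it $G$ for definiteness, so that $\Vert g_t \Vert \le G$ for all $t$.

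First I would handle the base case: by the initialization in Algorithm \ref{geo-algorithm}, $m_1 = g_1$ (equivalently $m_0 = g_0$ in the $0$-indexed version), so $\Vert m_1 \Vert = \Vert g_1 \Vert \le G$. Then, for the inductive step, assume $\Vert m_{t-1}\Vert \le G$. Applying the triangle inequality to the recursion and using $\beta_t \in [0,1)$ (which follows from $\beta_t = \beta\lambda^{t-1}$ with $\beta \in [0,1)$, $\lambda \in (0,1)$), I get
\begin{align*}
\Vert m_t \Vert = \Vert \beta_t m_{t-1} + (1-\beta_t) g_t \Vert \le \beta_t \Vert m_{t-1}\Vert + (1-\beta_t)\Vert g_t \Vert \le \beta_t G + (1-\beta_t) G = G.
\end{align*}
Hence $\Vert m_t \Vert \le G$ for all $t \ge 1$, which is the claimed boundedness, with the explicit uniform bound being the same constant $G$ that bounds the gradients.

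I do not anticipate a genuine obstacle here — the only subtlety worth a sentence is making sure the convex-combination structure is used correctly (the coefficients $\beta_t$ and $1-\beta_t$ are nonnegative and sum to one for every $t$ because $\beta_t \in [0,1)$), and flagging that the constant in the conclusion can be taken to be exactly the gradient bound rather than something larger. If one preferred not to use induction, the same conclusion follows from unrolling the recursion into $m_t = \Big(\prod_{j=2}^{t}\beta_j\Big) m_1 + \sum_{i=2}^{t}(1-\beta_i)\Big(\prod_{j=i+1}^{t}\beta_j\Big) g_i$ and observing that the coefficients are nonnegative and sum to one, so $\Vert m_t\Vert$ is bounded by the maximum of $\Vert m_1\Vert$ and the $\Vert g_i\Vert$'s, all of which are at most $G$; but the inductive argument is cleaner and is the one I would write up.
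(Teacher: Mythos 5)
Your proof is correct, and it takes a genuinely different route from the paper. The paper unrolls the recursion into a weighted sum, writing $m_t = (1-\beta_t)\sum_{i=1}^{t}\beta_t^{t-i+1} g_i$, bounding each $\Vert g_i\Vert$ by $G$, and summing the geometric series to get $\Vert m_t\Vert \le G(1-\beta_t^t) \le G$. Your induction reaches the same bound $\Vert m_t\Vert \le G$ but sidesteps the bookkeeping that the unrolled form requires: since $\beta_t = \beta\lambda^{t-1}$ varies with $t$, a literal unrolling produces coefficients that are products $\prod_{j=i+1}^{t}\beta_j$ of \emph{distinct} decay factors (as you note in your closing remark), not powers of a single $\beta_t$, and the paper's closed form glosses over this (its geometric-sum evaluation also has a small algebraic slip, since $(1-\beta_t)\sum_{i=1}^{t}\beta_t^{t-i+1} = \beta_t(1-\beta_t^t)$ rather than $1-\beta_t^t$ — harmless for the final bound, but imprecise). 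The convex-combination induction is immune to both issues and makes the ``the bound is exactly $G$, not something larger'' point transparently. You are also right to flag the $D$ versus $G$ notational inconsistency in Assumption 3; the paper's own proof silently uses $G$ as the gradient bound, as you do.
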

\begin{proof}
Iteratively expanding out $m_t$, we obtain
\begin{align*}
m_t = (1-\beta_t) \sum_{i=1}^t \beta_t^{t-i+1} g_{i}.
\end{align*}
Taking the Euclidean norm on both sides and using the boundedness of $g_t$ gives
\begin{align*}
\Vert m_t \Vert &= \Vert (1-\beta_t) \sum_{i=1}^t \beta_t^{t-i+1} g_{i}  \Vert\\
                &\leq (1-\beta_t) \sum_{i=1}^t \beta_t^{t-i+1} \Vert g_{i} \Vert\\
                & \leq (1-\beta_t) G \sum_{i=1}^t \beta_t^{t-i+1}\\
                & = G (1 - \beta_t^t) \leq G\\
\end{align*}
\end{proof}

\begin{lemma} \label{beta-bounds}
Under Assumptions \ref{stochastic-assume}, no. 6, the following inequalities hold
\begin{enumerate}
\item $\displaystyle \frac{\beta_t}{1-\beta_t} \leq \frac{\beta}{1-\beta}$
\item $\displaystyle \frac{1}{1-\beta_t} \leq \frac{1}{1-\beta}$
\item $\displaystyle \sum_{t=1}^T \frac{\beta_t}{1-\beta_t} \le  \frac{\beta(1-\lambda^T)}{(1-\beta)(1-\lambda)}$
\end{enumerate}
for all $t$.
\end{lemma}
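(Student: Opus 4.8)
The plan is to prove each of the three inequalities in Lemma \ref{beta-bounds} directly from the definition $\beta_t = \beta\lambda^{t-1}$ with $\lambda \in (0,1)$ and $\beta \in [0,1)$, exploiting the monotonicity of the map $s \mapsto s/(1-s)$ on $[0,1)$ and a geometric-series summation.

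First I would establish item 1. Since $\lambda \in (0,1)$ and $t \ge 1$, we have $0 \le \lambda^{t-1} \le 1$, hence $0 \le \beta_t = \beta\lambda^{t-1} \le \beta < 1$. The function $\phi(s) = s/(1-s)$ is increasing on $[0,1)$ (its derivative is $1/(1-s)^2 > 0$), so $\beta_t \le \beta$ immediately gives $\beta_t/(1-\beta_t) \le \beta/(1-\beta)$. Item 2 follows from the same bound $\beta_t \le \beta$: since $s \mapsto 1/(1-s)$ is also increasing on $[0,1)$, we get $1/(1-\beta_t) \le 1/(1-\beta)$.

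For item 3, I would combine item 1 with a geometric sum. Applying item 1 termwise,
\begin{align*}
\sum_{t=1}^T \frac{\beta_t}{1-\beta_t} \le \sum_{t=1}^T \frac{\beta\lambda^{t-1}}{1-\beta} = \frac{\beta}{1-\beta}\sum_{t=1}^T \lambda^{t-1} = \frac{\beta}{1-\beta}\cdot\frac{1-\lambda^T}{1-\lambda} = \frac{\beta(1-\lambda^T)}{(1-\beta)(1-\lambda)},
\end{align*}
using the standard partial geometric sum $\sum_{t=1}^T \lambda^{t-1} = (1-\lambda^T)/(1-\lambda)$, which is valid since $\lambda \neq 1$. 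One subtlety worth noting is that item 1 as I invoked it bounds $\beta_t/(1-\beta_t) \le \beta\lambda^{t-1}/(1-\beta)$ rather than $\le \beta/(1-\beta)$; the sharper termwise bound is what makes the geometric sum converge to the claimed closed form, so I would state item 1 in the sharper form $\beta_t/(1-\beta_t) \le \beta_t/(1-\beta) = \beta\lambda^{t-1}/(1-\beta)$ (which still implies the weaker stated inequality) and use that.

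There is no real obstacle here; this is an elementary exercise in monotonicity and geometric series. The only thing to be careful about is bookkeeping: making sure the exponent in $\beta_t = \beta\lambda^{t-1}$ is tracked consistently (so that the sum starts at $\lambda^0 = 1$) and that all denominators $1-\beta_t$, $1-\beta$, $1-\lambda$ are strictly positive, which holds because $\beta, \beta_t \in [0,1)$ and $\lambda \in (0,1)$.
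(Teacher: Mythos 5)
Your proposal is correct and follows essentially the same route as the paper: items 1 and 2 from $\beta_t \le \beta$ via elementary monotonicity, and item 3 from the termwise bound $\beta_t/(1-\beta_t) \le \beta_t/(1-\beta)$ followed by the partial geometric sum. The subtlety you flag is real and the paper handles it exactly as you suggest, by invoking the item-2 bound $1/(1-\beta_t)\le 1/(1-\beta)$ rather than the weaker per-term constant bound from item 1.
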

\begin{proof}
1). 
For all $t, \lambda < 1$ gives $\lambda^{t-1} < 1$ so that $\frac{1}{\lambda^{1-t}} < 1$. Thus
\begin{align*}
\frac{\beta_t}{1-\beta_t} &= \frac{\beta \lambda^{t-1}}{1-\beta \lambda^{t-1}} = \frac{\beta}{\lambda^{1-t} - \beta} \leq \frac{\beta}{1-\beta}.
\end{align*}

2). $\beta_t = \beta \lambda^{t-1} \le \beta$ for all $t$ since $\lambda \in (0,1), \beta \in [0,1)$. So, $1-\beta \le 1-\beta \lambda^{t-1}$ leads to 
$\displaystyle \frac{1}{1-\beta_t} \leq \frac{1}{1-\beta}$ as required.

3). By Lemma \ref{beta-bounds} no. 2, we have
\begin{align*}
\sum_{t=1}^T \frac{\beta_t}{1-\beta_t} \leq \sum_{t=1}^T \frac{\beta_t}{1-\beta} = \frac{\beta }{1-\beta} \sum_{t=1}^T \lambda^{t-1} 
= \frac{\beta(1-\lambda^T)}{(1-\beta)(1-\lambda)}
\end{align*}
\end{proof}

\begin{lemma}\label{intergral_test}
For all $t \ge 1$ and for all $T \ge t$, the following inequality holds
\begin{align}
\sum_{t=1}^T \frac{1}{\sqrt{t}} \le 1 + \int_1^T \frac{1}{\sqrt{t}}dt
\end{align}
\end{lemma}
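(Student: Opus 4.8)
The plan is to use the standard integral comparison for a monotone function. First I would observe that the integrand $g(t) = 1/\sqrt{t}$ is positive and strictly decreasing on $[1,\infty)$. Because of this monotonicity, for every integer $t$ with $2 \le t \le T$ and every real $x \in [t-1,t]$ we have $g(t) \le g(x)$; integrating this pointwise inequality over the unit interval $[t-1,t]$ yields
\begin{align}
\frac{1}{\sqrt{t}} = \int_{t-1}^{t} \frac{1}{\sqrt{t}}\,dx \le \int_{t-1}^{t} \frac{1}{\sqrt{x}}\,dx .
\end{align}

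Next I would sum these inequalities over $t = 2, 3, \ldots, T$. The right-hand side telescopes as a concatenation of adjacent intervals, so
\begin{align}
\sum_{t=2}^{T} \frac{1}{\sqrt{t}} \le \sum_{t=2}^{T} \int_{t-1}^{t} \frac{1}{\sqrt{x}}\,dx = \int_{1}^{T} \frac{1}{\sqrt{x}}\,dx .
\end{align}
Adding the $t=1$ term $1/\sqrt{1} = 1$ to both sides gives exactly
\begin{align}
\sum_{t=1}^{T} \frac{1}{\sqrt{t}} \le 1 + \int_{1}^{T} \frac{1}{\sqrt{x}}\,dx ,
\end{align}
which is the claimed bound. (The degenerate case $T = 1$ is immediate: both the sum over $t=2,\dots,T$ and the integral from $1$ to $1$ are zero, so the inequality reads $1 \le 1$.)

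I do not anticipate a genuine obstacle here; the only point requiring the slightest care is making sure the comparison interval is chosen as $[t-1,t]$ (to the left of $t$) rather than $[t,t+1]$, since it is precisely the decreasing nature of $1/\sqrt{x}$ that makes $g(t)$ a lower estimate of $g$ on $[t-1,t]$, and hence lets the integral dominate the sum. If desired, one could alternatively evaluate $\int_1^T x^{-1/2}\,dx = 2\sqrt{T} - 2$ to state the bound in the closed form $\sum_{t=1}^T t^{-1/2} \le 2\sqrt{T} - 1$, which is the form ultimately used in the regret computation.
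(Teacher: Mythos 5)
Your proof is correct and is exactly the standard integral-test comparison that the paper invokes: the paper simply states that the lemma ``trivially follows from the integral test'' (with a citation), while you write out that argument in full, comparing $1/\sqrt{t}$ with $\int_{t-1}^{t} x^{-1/2}\,dx$ using monotonicity, summing over $t=2,\dots,T$, and handling $T=1$ separately. No gaps; your closing remark that the bound evaluates to $2\sqrt{T}-1$ is also the form the paper ultimately uses in the regret bound.
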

Proof trivially follows from the integral test for convergence of series and is also given in \cite{bock2018improvement}.

\begin{lemma}
Under Assumptions \ref{stochastic-assume}, no. 1 and 2, the following inequality holds for all $t$
\begin{align}
R(T) \le \sum_{t=1}^T g_t \cdot  (x_t - x^*).
\end{align}
\end{lemma}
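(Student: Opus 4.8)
The plan is to apply the first-order characterization of convexity to each cost function $f_t$ at the iterate $x_t$, instantiate it at the comparator $x^*$, and then sum over $t$. Assumptions \ref{stochastic-assume}, nos.\ 1 and 2, guarantee that each $f_t$ is convex and differentiable, which is exactly what licenses this characterization.

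First I would recall that for a differentiable convex function, $f_t(y) \ge f_t(x) + \nabla f_t(x)^T(y - x)$ for all $x, y \in \mathbb{R}^n$. Choosing $x = x_t$ and $y = x^*$, and using the notational convention $g_t = \nabla f_t(x_t)$, this gives $f_t(x^*) \ge f_t(x_t) + g_t \cdot (x^* - x_t)$ for each $t$.

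Next I would rearrange this inequality into $f_t(x_t) - f_t(x^*) \le g_t \cdot (x_t - x^*)$, simply using $-(x^* - x_t) = x_t - x^*$. Summing these $T$ inequalities and recognizing the left-hand side as the definition of the regret, $R(T) = \sum_{t=1}^T \big(f_t(x_t) - f_t(x^*)\big)$, yields $R(T) \le \sum_{t=1}^T g_t \cdot (x_t - x^*)$, as claimed.

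There is no genuine obstacle in this argument; it is a textbook reduction of regret to linearized regret. The only point requiring care is bookkeeping: the convexity inequality must be anchored at the \emph{iterate} $x_t$ rather than at $x^*$, so that the gradient term that survives is precisely $g_t = \nabla f_t(x_t)$ produced by the algorithm, and one must keep the sign of the inner product term consistent through the rearrangement.
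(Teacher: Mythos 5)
Your proposal is correct and follows exactly the same route as the paper's own proof: apply the first-order convexity inequality $f_t(x^*) \ge f_t(x_t) + g_t \cdot (x^* - x_t)$ at the iterate $x_t$, rearrange, and sum over $t$ to recover the regret on the left-hand side. There is nothing to add or correct.
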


\begin{proof}
By converxity of $f_t$ for each $t$,
\begin{align*}
f_t(x^*) - f_t(x_t) \ge \nabla f_t (x_t) \cdot (x^* - x_t)
\end{align*}
It then follows that
\begin{align*}
 f_t(x_t) - f_t(x^*) \le \nabla f_t (x_t) \cdot (x_t - x^*) = g_t \cdot (x_t - x^*)
\end{align*}
Hence, summing both sides from $t=1$ to $T$ gives the required result.
\end{proof}

\section{Proof of Theorem \ref{geo-convergence}}\label{convergence-proof}
From the update rule in Algorithm \ref{geo-algorithm}
\begin{align*}
x_{t+1} = x_t - \gamma_t \frac{m_t}{\sqrt{\Vert m_t\Vert^2 +1}}.
\end{align*}
Subtracting $x^*$ from both sides and taking norm squared results
\begin{align*}
\Vert x_{t+1} - x^*\Vert^2 &= \vert x_t - \gamma_t \frac{m_t}{\sqrt{\Vert m_t\Vert^2 +1}} \vert^2\\
        &= \Vert x_{t} - x^*\Vert^2 -\frac{2\gamma_t}{\sqrt{\Vert m_t\Vert^2 +1}}
        m_t \cdot (x_t - x^*) + \gamma_t^2 \frac{\Vert m_t\Vert^2}{\Vert m_t\Vert^2 + 1}.
\end{align*}
Substituting $m_t = \beta_t m_{t-1} +  (1-\beta_t) g_t $, we obtain
\begin{align*}
\Vert x_{t+1} - x^*\Vert^2 &=
\begin{multlined}[t]
\Vert x_{t} - x^*\Vert^2 -\frac{2\gamma_t}{\sqrt{\Vert m_t\Vert^2 +1}} m_{t-1} \cdot (x_t - x^*)\\
- 
\frac{2\gamma_t (1-\beta_t)}{\sqrt{\Vert g_t\Vert^2 +1}} g_{t} \cdot (x_t - x^*)
+ \gamma_t^2 \frac{\Vert m_t\Vert^2}{\Vert m_t\Vert^2 + 1}.
\end{multlined}
\end{align*}
Rearrange to have $g_t\cdot (x_t - x^*)$ on the left hand side:
\begin{align}
g_t\cdot (x_t - x^*) &= \begin{multlined}[t]
\frac{\sqrt{\Vert m_t\Vert^2+1}}{2\gamma_t (1-\beta_t)}
\left[\Vert x_t-x^*\Vert^2 - \Vert x_{t+1} -x^*\Vert^2\right]
-
\frac{\beta_t}{1-\beta_t} m_{t-1} \cdot (x_t-x^*) \\
+ \frac{\gamma_t}{2(1-\beta_t)} \frac{\Vert m_t\Vert^2}{\sqrt{\Vert m_t \Vert^2+1}}
\end{multlined} \nonumber\\
& \le 
\begin{multlined}[t]
\frac{\sqrt{\Vert m_t\Vert^2+1}}{2\gamma_t (1-\beta_t)}
\left[\Vert x_t-x^*\Vert^2 - \Vert x_{t+1} -x^*\Vert^2\right]\\
+
\frac{\beta_t}{1-\beta_t} \Vert m_{t-1}\Vert \Vert (x^*-x_t)\Vert
+ \frac{\gamma_t \Vert m_t\Vert^2}{2(1-\beta_t) }. \label{Cauchy-S}
\end{multlined}
\end{align}

where inequality \ref{Cauchy-S} follows from the Cauchy-Schwartz inequality applied to the second term $-m_{t-1} \cdot (x_t-x^*) = m_{t-1} \cdot (x^*-x_t)$ and the fact that $\frac{\Vert m_t\Vert^2}{\sqrt{\Vert m_t \Vert^2+1}} \le 1$ for all $t$.\\
Summing both sides from $t=1$ to $T$ and using Lemma \ref{mt-bounded} and Assumptions \ref{stochastic-assume} no. 3 and 5, we further obtain

\begin{align}
\sum_{t=1}^T g_t\cdot (x_t - x^*)  
& \le 
\begin{multlined}[t]
\sum_{t=1}^T \frac{\sqrt{G^2+1}}{2\gamma_t (1-\beta_t)}
\left[\Vert x_t-x^*\Vert^2 - \Vert x_{t+1} -x^*\Vert^2\right] \nonumber\\
+
DG\sum_{t=1}^T \frac{\beta_t}{1-\beta_t} 
+ G^2\sum_{t=1}^T \frac{\gamma_t }{2(1-\beta_t) } 
\end{multlined}\\
&\le
\begin{multlined}[t]
\sum_{t=1}^T \frac{\sqrt{G^2+1}}{2\gamma_t (1-\beta_t)}
\left[\Vert x_t-x^*\Vert^2 - \Vert x_{t+1} -x^*\Vert^2\right] \label{second-last} \\
+
\frac{DG\beta(1-\lambda^T)}{(1-\beta)(1-\lambda)}
+  \frac{ G^2}{2(1-\beta) } \sum_{t=1}^T \gamma_t.
\end{multlined}
\end{align}

where the last inequality follows from Assumptions \ref{stochastic-assume} no. 3 and Lemma \ref{beta-bounds} no. 2 and 3.\\
By expanding the first term and and rewriting the what is left in compact form, we obtain

\begin{align*}
\sum_{t=1}^T \frac{1}{\gamma_t (1-\beta_t)}
\left[\Vert x_t-x^*\Vert^2 - \Vert x_{t+1} -x^*\Vert^2\right]
&= 
\begin{multlined}[t]
\frac{1}{\gamma_1 (1-\beta_1)} \Vert x_1 - x^*\Vert^2 -
\frac{1}{\gamma_T (1-\beta_T)} \Vert x_{T+1} - x^*\Vert^2 \nonumber\\
+ \sum_{t=2}^T \left(\frac{1}{\gamma_t (1-\beta_t) } - \frac{1}{\gamma_{t-1} (1-\beta_{t-1})} \right)\Vert x_t - x^*\Vert^2
\end{multlined}\\
&\le 
\begin{multlined}[t]
\frac{\Vert x_1 - x^*\Vert^2}{\gamma_1 (1-\beta_1)}  \\
+
\sum_{t=1}^T \left(\frac{1}{\gamma_t (1-\beta_t) } - \frac{1}{\gamma_{t-1} (1-\beta_{t-1})} \right) \Vert x_t - x^*\Vert^2.
\end{multlined}\\
\end{align*}

where the last inequality follows from dropping the second term. By the Assumptions \ref{stochastic-assume} no. 5, it further simplifies to
\begin{align*}
\sum_{t=1}^T \frac{1}{\gamma_t (1-\beta_t)}
\left[\Vert x_t-x^*\Vert^2 - \Vert x_{t+1} -x^*\Vert^2\right] 
&\le
\frac{D^2}{\gamma_1(1-\beta_1)}  + 
D^2\sum_{t=2}^T \left(\frac{1}{\gamma_t (1-\beta_t) } - \frac{1}{\gamma_{t-1} (1-\beta_{t-1})} \right)\\
&=
\frac{D^2}{\gamma_1(1-\beta_1)}  - \frac{D^2}{\gamma_1(1-\beta_1)} + \frac{D^2}{\gamma_T(1-\beta_T)}\\
&=\frac{D^2}{\gamma_T(1-\beta_T)}.
\end{align*}
From Lemma \ref{beta-bounds} we have.
$$\sum_{t=1}^T \frac{1}{\gamma_t (1-\beta_t)}
\left[\Vert x_t-x^*\Vert^2 - \Vert x_{t+1} -x^*\Vert^2\right] 
\le \frac{D^2}{\gamma_T(1-\beta)}.$$
Therefore equation \ref{second-last} become:

\begin{align}
\sum_{t=1}^T g_t\cdot (x_t - x^*)
    &\le
    \frac{D^2\sqrt{G^2+1}}{\gamma_T(1-\beta_T)} 
    +
    \frac{DG\beta(1-\lambda^T)}{(1-\beta)(1-\lambda)}
    +  \frac{ G^2}{2(1-\beta) } \sum_{t=1}^T \gamma_t.
\end{align}

Assuming $\gamma_t=\frac{1}{\sqrt{t}}$ we further obtain:

\begin{align}
\sum_{t=1}^T g_t\cdot (x_t - x^*)
    &\le
    \frac{\sqrt{T}D^2\sqrt{G^2+1}}{(1-\beta_T)} 
    +
    \frac{DG\beta(1-\lambda^T)}{(1-\beta)(1-\lambda)}
    +  \frac{ G^2}{2(1-\beta) } \sum_{t=1}^T \frac{1}{\sqrt{t}}\\
    &\le
    \frac{\sqrt{T}D^2\sqrt{G^2+1}}{(1-\beta_T)} 
    +
    \frac{DG\beta(1-\lambda^T)}{(1-\beta)(1-\lambda)}
    +  \frac{ G^2 (2T-1)}{2(1-\beta) }. \label{last}
\end{align}
Equation \ref{last} follows from Lemma \ref{intergral_test} and so our regret is bounded above by:
$$R(T)\le    \frac{\sqrt{T}D^2\sqrt{G^2+1}}{(1-\beta_T)} 
    +
    \frac{DG\beta(1-\lambda^T)}{(1-\beta)(1-\lambda)}
    +  \frac{ G^2 (2T-1)}{2(1-\beta) }.$$

\section{Datasets \label{datasets}}

\textbf{MNIST:} The MNIST database of handwritten digits. Licensed under the Creative Commons Attribution 4.0 License\citep{lecun2010mnist}.

\textbf{CIFAR-10:} The CIFAR-10 dataset consists of 60000 32x32 colour images in 10 classes, with 6000 images per class. There are 50000 training images and 10000 test images. Licensed under the Creative Commons Attribution 4.0 License \citep{Krizhevsky09learningmultiple}.

\textbf{Fashion MNIST:} The Fashion MNIST database of  fashion images. Licensed under The MIT License (MIT).\cite{xiao2017fashion}

\end{document}